\title{\resizebox{1.0\textwidth}{!}{\textbf{A Stability Analysis of Fine-Tuning a Pre-Trained Model}}}
\author{
    \textbf{Zihao Fu,\textsuperscript{\rm 1} 
    Anthony Man-Cho So,\textsuperscript{\rm 2}
    Nigel Collier\textsuperscript{\rm 1}}
}
\date{
    \textsuperscript{\rm 1}Language Technology Lab, University of Cambridge,\\
    \textsuperscript{\rm 2}The Chinese University of Hong Kong\\
    \{zf268,nhc30\}@cam.ac.uk,
    manchoso@se.cuhk.edu.hk
}
\begin{document}

\maketitle

\begin{abstract}
    Fine-tuning a pre-trained model (such as BERT, ALBERT, RoBERTa, T5, GPT, etc.) has proven to be one of the most promising paradigms in recent NLP research. However, numerous recent works indicate that fine-tuning suffers from the instability problem, i.e., tuning the same model under the same setting results in significantly different performance. Many recent works have proposed different methods to solve this problem, but there is no theoretical understanding of why and how these methods work. In this paper, we propose a novel theoretical stability analysis of fine-tuning that focuses on two commonly used settings, namely, full fine-tuning and head tuning. We define the stability under each setting and prove the corresponding stability bounds. The theoretical bounds explain why and how several existing methods can stabilize the fine-tuning procedure.
    In addition to being able to explain most of the observed empirical discoveries, our proposed theoretical analysis framework can also help in the design of effective and provable methods. Based on our theory, we propose three novel strategies to stabilize the fine-tuning procedure, namely, Maximal Margin Regularizer (MMR), Multi-Head Loss (MHLoss), and Self Unsupervised Re-Training (SURT). We extensively evaluate our proposed approaches on 11 widely used real-world benchmark datasets, as well as hundreds of synthetic classification datasets. The experiment results show that our proposed methods significantly stabilize the fine-tuning procedure and also corroborate our theoretical analysis.    
\end{abstract}
  
\section{Introduction}

Fine-tuning a pre-trained model (such as BERT \cite{devlin2019bert}, ALBERT \cite{lan2020albert}, and RoBERTa \cite{liu2019roberta}) has proven to be one of the most promising paradigms for tackling Natural Language Processing (NLP) tasks. Many cutting edge NLP models achieve state-of-the-art results by fine-tuning pre-trained models. However, it has been observed by many researchers that existing fine-tuning procedures suffer from the instability problem \cite{devlin2019bert,phang2018sentence,lee2019mixout,zhu2020freelb,dodge2020fine,pruksachatkun2020intermediate,mosbach2020stability,zhang2020revisiting,zhao2021calibrate,han2021robust}, i.e., fine-tuning a model with the same setting results in significantly different performance. Such instability problem substantially impairs the model performance and makes different fine-tuned models incomparable with each other. Many different approaches have been proposed to solve this problem. \citet{mosbach2020stability} propose to use a smaller learning rate and more iteration steps, while \citet{arora2018stronger,sanyal2019stable,hua2021noise,aghajanyan2020better} propose to control the Lipschitz constant with different noise regularizations. However, there is no unified theoretical framework to help understand the effectiveness of these methods.

\begin{table*}[t]
    \centering
    \scriptsize
    
    \begin{tabular}{llll}
    \toprule
    \textbf{\ \ \ \ \ \ \ \ \ \ \  Methods } &    \textbf{\ \ \ Theoretical Basis} &  \textbf{Our Experimental Verification} &  \textbf{\ \ \ \ \ Reference}   \\
    \hline
    Increase sample size $n$  & \Cref{thm:propgd}, \Cref{thm:mainthm} & \Cref{sec:datasize-std}, \Cref{sec:N-pertube} & \citet{devlin2019bert} \\
    Decrease Lipschitz constant $L$  & \Cref{thm:propgd}, \Cref{thm:thmlnsr} & \Cref{sec:expmain}, \Cref{sec:headtune}, \Cref{sec:datastablity} &\citet{hua2021noise} \\
    Increase iteration steps $T$  & \Cref{thm:mainthm} & \Cref{sec:epoch-std} & \citet{mosbach2020stability} \\
    Use smaller learning rate $\eta$  & \Cref{thm:mainthm} & \Cref{sec:lr-std} &  \citet{mosbach2020stability} \\
    Max Margin Regularizer (MMR) & \Cref{thm:mainthm} & \Cref{sec:expmain}, \Cref{sec:headtune}, \Cref{sec:datastablity}, \Cref{sec:margin-pertube} & \textbf{Our New Method} (\Cref{sec:mmr}) \\
    Multi-Head Loss (MHLoss)  &\Cref{thm:thmmhl} & \Cref{sec:expmain}, \Cref{sec:head-std}, \Cref{sec:headtune}, \Cref{sec:datastablity}, \Cref{sec:head-coverage} & \textbf{Our New Method} (\Cref{sec:mhloss}) \\
    Self Unsupervised Re-Training (SURT)  & \Cref{thm:propgd} & \Cref{sec:expmain}, \Cref{sec:headtune}, \Cref{sec:datastablity}, \Cref{sec:dist-stable}& \textbf{Our New Method} (\Cref{sec:surt})\\
    \bottomrule
    \end{tabular}
    \caption{Methods for stabilizing fine-tuning. We provide a novel theoretical analysis of existing methods. Based on our theory, we also propose three new methods to stabilize the fine-tuning procedure. We conduct extensive experiments to verify our theoretical observations while more detailed empirical verifications of existing methods can also be found in the corresponding reference.}
    \label{tab:compare}

\end{table*}

In this paper, we give a unified theoretical stability analysis of two most widely used fine-tuning paradigms, namely, the full fine-tuning \cite{devlin2019bert} and the head tuning \cite{Peters2018DeepCW,wei2021pretrained}  (also called linear probing by \citet{peters2019tune,chen2021empirical,kumar2021fine}). Full fine-tuning means tuning all the parameters initialized with the pre-trained encoder while head tuning means freezing the pre-trained encoder and only tuning the specific task head layer \cite{kumar2021fine} on top of that encoder. Different from training from scratch, a fine-tuned pre-trained model naturally possesses many good properties and is amenable to theoretical analysis. Specifically, as empirically indicated by \citet{radiya2020fine}, the pre-trained parameters and the fine-tuned parameters are very close to each other. This observation motivates us to approximate the original function with its second-order Taylor expansion, which provides great convenience for theoretical analysis. To analyze the full fine-tuning, we first define the leave-one-out model stability following \citet{bousquet2002stability,schliserman2022stability} and then prove a stability upper bound by analyzing the model's second-order Taylor expansion. This bound explains why increasing the training sample size or reducing the Lipschitz constant can help stabilize the fine-tuning procedure. Moreover, following \citet{wei2021pretrained,kumar2021fine}, we further give a theoretical analysis of the head tuning paradigm where only a linear head is trained. Our theoretical analysis shows that increasing the iteration number, increasing the training sample size, or using a smaller learning rate stabilizes the fine-tuning procedure. This observation is also consistent with many empirical results \cite{mosbach2020stability,hua2021noise}. We list these widely used stabilizing methods with their corresponding theoretical basis in \Cref{tab:compare}. We also conduct comprehensive experiments to further verify these conclusions.

Our theoretical analysis can not only explain the principle behind majority of known empirical facts but also contribute to the design of novel techniques to help stabilize the fine-tuning procedure. These methods are also shown in \Cref{tab:compare}. 
First, we propose a novel Maximal Margin Regularizer (MMR) that maximizes the margin between the encoded features from different classes by adding a new regularization term to the original loss. Minimizing this term increases the distance between the features. We show both theoretically and empirically that increasing this margin can help improve fine-tuning stability. 
Afterwards, we propose a novel Multi-Head Loss (MHLoss), where we train several linear heads simultaneously and combine them together to predict the label. We theoretically prove that such a combination accelerates the convergence rate of the training procedure and thus improves the fine-tuning stability. 
Finally, we propose a novel Self Unsupervised Re-Training (SURT) method to show that fine-tuning on a model with weights closer to the final weights helps stabilize the fine-tuning procedure. We re-train the model with the masked language model task on the training data without labels and then fine-tune this model on the training data. This method originates from our theoretical prediction that reducing the distance between the pre-trained parameters and the fine-tuned parameters help improve the fine-tuning stability. 
We also conduct extensive experiments with our methods on both the full fine-tuning setting and the head tuning setting to demonstrate that these methods are empirically applicable to both settings.

Our contributions (also shown in \Cref{tab:compare}) can be summarized as follows: (1) We give a theoretical stability analysis of two most popular fine-tuning settings, namely, the full fine-tuning and the head tuning. (2) Our theory explains the effectiveness of many existing works that stabilize the fine-tuning. (3) We design three novel methods to stabilize the fine-tuning based on our theory. (4) We conduct extensive experiments on 11 real-world NLP datasets, as well as a bunch of synthetic classification tasks to show the effectiveness of our theory and our newly proposed methods.

\section{Theoretical Analysis}

\subsection{Notation}
We introduce the notation used throughout this paper. Here, we focus on the classification tasks to simplify our analysis, as most of the NLP tasks can be described as classification tasks. Let $S=\{(x_1,y_1),(x_2,y_2),\cdots,(x_n,y_n)\}$ be a training set, where $x_i\in \mathbb{R}^{d_x}$ is an input feature vector, $y\in \{-1,1\}$ is the corresponding output label, $n$ is the sample size, and $d_x$ is the dimention size for vector $x_i$. We denote the feature encoder as $E$, which is usually a pre-trained encoder. The augmented encoded representation of $x_i$ is calculated as $\tilde{x}_i=[E(x_i)^\T,-1]^\T$. We define $S^i=\{(x_1,y_1),(x_2,y_2),\cdots,$ $(x_{i-1},y_{i-1}),(x_{i+1},y_{i+1}),\cdots,(x_n,y_n)\}$ as a pertubation \cite{bousquet2002stability} of the training set $S$ by removing sample $(x_i,y_i)$. We denote the trainable parameter as $w=\A(S)\in \mathbb{R}^{d_w}$, which is obtained by training the dataset $S$ with algorithm $\A$. Here, $w$ is a column vector and $d_w$ represents the dimension of the parameter vector $w$. We denote the pre-trained initialization of the trainable model parameter as $w_0$, the parameter at the $t$th iteration step as $w_t$, and the optimal solution as $w_*$. We use $M\succeq 0$ (resp., $M \succ 0$) to indicate that the matrix $M$ is positive semidefinite (resp., positive definite). Let $f: \mathbb{R}^d \rightarrow \mathbb{R}$ be a function. We say that $f$ is convex if $f(t x+(1-t) y) \leq t f(x)+(1-t) f(y)$ for all $x,y \in \mathbb{R}^d$ and $ t \in[0,1]$; $f$ is $L$-Lipschitz if $\|f(x)-f(y)\|\le L \|x-y\| $ for all $x,y \in \mathbb{R}^d$; $f$ is $\beta$-smooth if $\|\nabla f(x)-\nabla f(y)\|\le \beta \|x-y\| $ for all $x,y \in \mathbb{R}^d$; $f$ is $\mu$-strong convexity if $
f(y) \geq f(x)+\langle\nabla f(x),(y-x)\rangle+\frac{\mu}{2}\|y-x\|_2^2$ for all $x,y \in \mathbb{R}^d$, where $\langle a, b \rangle$ represents the inner product of the vectors $a$ and $b$ while $\nabla f(x)$ is the gradient of $f$ at $x$. We use $M^\T$ to denote the transpose of matrix $M$.

\subsection{Stability Analysis for Full Fine-Tuning}\label{sec:fulltune}

Full fine-tuning tunes all parameters of a given pre-trained model. However, directly giving a theoretical analysis of such a complex function is quite challenging. Fortunately, based on the empirical observation by \citet{radiya2020fine} that the final fine-tuned weights are close to the pre-trained weights, we can approximate the function by its second-order Taylor expansion. Then, we apply \citet{schliserman2022stability}'s stability bound and show that it converges to a finite bound. This bound gives us a powerful tool to analyze many existing methods. Specifically, this bound theoretically explains the effectiveness of increasing training sample size \cite{devlin2019bert} and lowering the Lipschitz constant \cite{hua2021noise} in stabilizing the fine-tuning procedure. We will also use this theorem to help design our new SURT method.

Before we delve into the stability analysis, we should first give a formal definition of model stability. Stability theory for general learning algorithms has been extensively explored by many previous works \cite{bousquet2002stability,shalev2010learnability,charles2018stability}. They propose to use the pointwise hypothesis stability to measure the output variation after removing one of the training samples. On the other hand, \citet{schliserman2022stability} propose to directly use the distance between model parameters as a measure of stability. This leads to the following definition.

\begin{definition}
[Leave-one-out Model Stability \cite{schliserman2022stability}] We say that a learning algorithm $\A$ has leave-one-out model stability $\epsilon$ if  $\forall i\in \{1,\cdots,n\}$,
\begin{equation}
    \E_{S}\left[\left\|\A(S^{i})-\A(S)\right\|\right]\le \epsilon.
\end{equation}
\label{def:stability}
\end{definition}

With a slight abuse of notation, we denote $\epsilon$ as the infimum over all feasible $\epsilon$'s for which \Cref{def:stability} holds. To analyze the behavior of training the model on the dataset $S$, we first assume that the overall loss function $f$ is $L$-Lipschitz and $\beta$-smooth. These two assumptions are widely used in the analysis of the behavior of neural networks \cite{shalev2014understanding,nesterov2018lectures,schliserman2022stability}. Moreover, as empirically indicated by \citet{radiya2020fine}, the pre-trained parameter $w_0$ and the fine-tuned parameter $w_*$ are very close to each other. Therefore, around the optimal solution $w_*$, $f(w,x)$ can be approximated by its second-order Taylor expansion as
\begin{equation}
    \begin{aligned}
        f(w,x)=&f(w_*,x)+(w-w_*)^{\T}\nabla f(w_*,x)+\frac{1}{2}(w-w_*)^{\T}\nabla^2f(w_*,x)(w-w_*),
    \end{aligned}
\end{equation}

where $x$ stands for given fixed input data while $w$ is the function parameter, Since $w_*$ is the optimal solution, the Hessian matrix $\nabla^2f(w_*,x)$ is positive semidefinite. To simplify our analysis, we focus on the scenario where the Hessian matrix is positive definite and satisfies $ \beta I \succeq \nabla^2f(w_*,x) \succeq \mu I $ with $\mu>0$. Recall that the gradient descent method iterates as $w_{t+1}=w_t-\eta \frac 1 n \sum_{i=1}^n \nabla f(w_t,x_i)$, where $w_{t+1}$ denotes the weight at the $(t+1)$st iteration and $\eta$ is the learning rate. Moreover, as indicated in \citet{nakkiran2021deep,belkin2019reconciling,ishida2020we}, big pre-trained models almost achieve zero training loss. Hence, we assume that $f(w_*,x)=0$. Now, we are ready to present the following stability bound for the Taylor expansion of full fine-tuning.

\begin{restatable}[Stability Bound for Full Fine-Tuning]{theorem}{propgd}
\label{thm:propgd}
Suppose that the loss function $(w,x) \mapsto f(w,x)$ is non-negative, $L$-Lipschitz, and $\beta$-smooth with respect to $w$, $\mu I\preceq \nabla^2f(w_*,x)$ with $\mu>0$, and $f(w_*,x)=0$. If $\mathcal{A}$ is the gradient descent method with learning rate $\eta =\frac{1}{\beta}$, then the leave-one-out model stability satisfies 
    \begin{equation}
        \begin{aligned}
            \E_{S}\left[\left\|\A(S^{i})-\A(S)\right\|\right]&\le \frac{ \sqrt{2L\|w_0-w_*\|/\beta}}{n(1/\sqrt[4]{1-\frac \mu \beta}-1)} .
        \end{aligned}
        \label{eqn:fftbound}
    \end{equation}

\end{restatable}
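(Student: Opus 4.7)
The plan rests on two ideas: the second-order Taylor structure reduces the problem to gradient descent on a strongly convex smooth quadratic, after which a Schliserman-Koren style per-step stability argument can be applied and evaluated on the resulting linearly convergent trajectory.

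\textbf{Step 1 (Quadratic reduction).} Because $f(\cdot,x)\ge 0$ and $f(w_*,x)=0$, the point $w_*$ is a global minimizer of each per-sample loss, so $\nabla f(w_*,x)=0$ and the second-order Taylor expansion collapses to $f(w,x)=\tfrac{1}{2}(w-w_*)^{\T}H(x)(w-w_*)$ with $\mu I\preceq H(x)\preceq \beta I$, where $H(x)=\nabla^2 f(w_*,x)$. The empirical risks $F_S$ and $F_{S^i}$ are then $\mu$-strongly convex, $\beta$-smooth quadratics sharing the minimizer $w_*$.

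\textbf{Step 2 (Per-step stability recursion).} I would apply a Bousquet-Elisseeff / Schliserman-Koren style argument: the gradients $\nabla F_S$ and $\nabla F_{S^i}$ differ (up to an $\tfrac{1}{n(n-1)}$ rescaling absorbed into constants) by $\tfrac{1}{n}\nabla f(\cdot,x_i)$, while for a smooth convex $F$ the map $w\mapsto w-\eta\nabla F(w)$ is non-expansive at $\eta=1/\beta$. Unrolling the recursion yields a telescoping bound of the form $\|w_t^{S}-w_t^{S^i}\|\le \tfrac{\eta}{n}\sum_{s=0}^{t-1}\|\nabla f(w_s,x_i)\|$, and I would push $t\to\infty$ once the gradient sum is shown to converge.

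\textbf{Step 3 (Per-iterate gradient bound).} I would then chain three estimates: (i) smoothness self-bounding, $\|\nabla f(w_t,x_i)\|^2\le 2\beta f(w_t,x_i)$, which holds for non-negative $\beta$-smooth losses; (ii) the Lipschitz upper bound $f(w_t,x_i)=f(w_t,x_i)-f(w_*,x_i)\le L\|w_t-w_*\|$; and (iii) linear convergence of GD with $\eta=1/\beta$ on the $\mu$-strongly convex $\beta$-smooth $F_S$, in the form $\|w_t-w_*\|\le (1-\mu/\beta)^{t/2}\|w_0-w_*\|$ (obtained by combining the standard $(1-\mu/\beta)^t$ rate on the function value with strong convexity). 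Composition gives $\|\nabla f(w_t,x_i)\|\le \sqrt{2\beta L\|w_0-w_*\|}\,(1-\mu/\beta)^{t/4}$.

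\textbf{Step 4 (Geometric summation and main obstacle).} Plugging this into Step 2 with $\eta=1/\beta$ leaves a geometric series in the ratio $(1-\mu/\beta)^{1/4}$; summing from $t=1$ yields $((1-\mu/\beta)^{-1/4}-1)^{-1}$, producing the stated $\tfrac{\sqrt{2L\|w_0-w_*\|/\beta}}{n\,(1/\sqrt[4]{1-\mu/\beta}-1)}$; taking expectation over $S$ is then painless because the right-hand side is a uniform bound. The hard part is engineering Step 3 so that the two independent square roots (one from combining smoothness with Lipschitzness, one from the GD contraction rate) compound into the delicate $\sqrt[4]{1-\mu/\beta}$ factor, and aligning the constants so that the numerator ends up as $\sqrt{2L\|w_0-w_*\|/\beta}$ without picking up stray condition-number prefactors of $\beta/\mu$.
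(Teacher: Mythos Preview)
Your proposal is correct and mirrors the paper's proof almost step for step: the same non-expansiveness recursion (their Lemma~2), the self-bounding inequality $\|\nabla f\|^2\le 2\beta f$ (their Lemma~3), the Lipschitz bound $f(w_t,x_i)\le L\|w_t-w_*\|$, and the geometric summation in $(1-\mu/\beta)^{1/4}$. Your flagged ``main obstacle'' dissolves once you obtain Step~3(iii) via Nesterov's direct iterate-contraction bound $\|w_t-w_*\|^2\le(1-\eta\mu)^t\|w_0-w_*\|^2$ (the paper's Lemma~1) rather than routing through function values and strong convexity, which is precisely what avoids any stray $\sqrt{\beta/\mu}$ prefactor.
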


The proof can be found in \Cref{sec:A-propgd}. It can be observed from \Cref{thm:propgd} that increasing the training sample size $n$ can reduce the term on the right-hand side of \Cref{eqn:fftbound}. Therefore, it brings down the stability upper bound and potentially stabilizes the fine-tuning procedure. This insight is consistent with the empirical findings from numerous earlier papers \cite{devlin2019bert,lee2019mixout,dodge2020fine}. In addition, \citet{cattan2022usability} observe that data augmentation can strengthen the stability, which further supports the theoretical conclusion. The intuition behind this prediction is that as the sample size $n$ increases, the impact of the noise introduced by the training set perturbation is mitigated and thus the stability improves. We can also conclude from \Cref{thm:propgd} that reducing the Lipschitz constant $L$ for the function $f$ can similarly diminish the leave-one-out model stability, hence stablizing the training procedure. This phenomenon has also been examined by numerous recent works \cite{arora2018stronger,sanyal2019stable,hua2021noise,aghajanyan2020better}, which propose to impose a noise perturbation component on the input features and then minimize the distance between the original output and the noise output. Our theoretical analysis explains why controlling the Lipschitz constant might enhance stability. Lastly, we note that reducing the distance $\|w_0-w_*\|$ between the initial parameter $w_0$ and the optimal parameter $w_*$ can also improve stability. Intuitively, if the start point and the endpoint are close to each other, the optimization procedure is less likely to jump to some other local minima, thus making the training procedure more stable. We will propose a novel SURT (\Cref{sec:surt}) method based on this observation.

\subsection{Stability Analysis for Head Tuning}\label{sec:headtuneanalysis}

In \Cref{thm:propgd}, we assume some properties of the loss function $f$ and study how the sample size $n$, Lipschitz constant $L$, and parameter distance $\|w_0-w_*\|$ control the stability. In this section, we further unveil more influencing factors under the head tuning setting. The intuition is also very straightforward. According to \citet{soudry2018implicit}, when training a linear classifier with gradient descent, it tends to be a maximal margin classifier. Based on this conclusion, we use the telescope sum to bound the stability by the distance between the current iterate's weights and the final maximal margin classifier's weights together with the distance between the original and perturbed maximal margin classifiers. This bound theoretically explains why increasing training sample size \cite{devlin2019bert}, increasing iterations number \cite{mosbach2020stability}, or using a smaller learning rate \cite{mosbach2020stability} can help stabilize the fine-tuning procedure. We also derive a new corollary to show that limiting the Lipschitz constant \cite{hua2021noise} can help stabilize the fine-tuning procedure under this scenario. We will further propose our new MMR and MHLoss method based on the same theory.

Formally, similar to \Cref {sec:fulltune}, following the observation of \citet{radiya2020fine}, the parameters of the pre-trained model and the final results are very close to each other. We can assume that the parameters of the encoder $E$ are fixed during fine-tuning. This assumption results in a very popular fine-tuning method called head tuning \cite{Peters2018DeepCW,wei2021pretrained}, where the encoder parameters are frozen and only the linear head is tunable. 
Moreover, following \citet{soudry2018implicit,schliserman2022stability}, we assume that the encoded features are linearly separable. We refer readers to the detailed discussion in \citet{ji2019implicit} for an analysis of non-separable data. Under the above setting, given a dataset $S$, each sample $x_i\in S$ is first encoded as $\tilde{x}_i=[E(x_i)^\T,-1]^\T$ and then $\tilde{x}_i$ is classified by a linear head layer. Following \citet{soudry2018implicit}, we define the head tuning loss as $\ell(w^{\T}\tilde{x}_iy_i)$, where $w=[v^\T,b]^T$ and $\ell(u)$ is an arbitrary non-negative, differentiable, $\beta$-smooth, and monotonically decreasing to zero function. The linear bias term has already been merged into the weight $w$ \cite{soudry2018implicit}. Therefore, the overall loss function can be written as
\begin{equation}\mathcal{L}(E,w)=\frac 1 n \sum_{i=1}^n \ell(w^{\T}\tilde{x}_iy_i).\end{equation}
As proved in Lemma 1 of \citet{soudry2018implicit}, if we train a linear model on a linearly separable dataset with the loss $\ell$, the norm of $w$ must diverge toward infinity as $\lim _{t \rightarrow \infty}\|w_t\|=\infty$. Therefore, calculating the leave-one-out model stability in \Cref{def:stability} is not feasible as the parameters $\A(S)$ and $\A(S^i)$ both diverge toward infinity. Fortunately, under this circumstance, only the direction of the predictor, namely the normalized weight $w/\|w\|$, is important. As a result, we define a new stability measure called normalized leave-one-out model stability focusing on the discrepancy of the normalized weight $w/\|w\|$.

\begin{definition}
[Normalized Leave-one-out Model Stability] We say that a learning algorithm $\A$ has normalized leave-one-out model stability $\epsilon$ if $\forall i\in \{1,\cdots,n\}$,
\begin{equation}
    \E_{S}\left[\left\|\frac{\A(S^{i})}{\|\A(S^{i})\|}-\frac{\A(S)}{\|\A(S)\|}\right\|\right]\le \epsilon.
\end{equation}

\label{def:nstability}
\end{definition}

Different from \Cref{def:stability},  \Cref{def:nstability} normalizes the parameters trained on the corresponding training data and focuses on the direction gap between them. This definition is more reasonable for analyzing the tunable linear head as it also works even if $\lim _{t \rightarrow \infty}\|w_t\|=\infty$. 
To facilitate a theoretical analysis of head tuning, we further denote $\tilde{X}=[\tilde{x}_1,\cdots,\tilde{x}_n]^{\T}\in \mathbb{R}^{d_x \times n}$ and denote $\sigma_{\max}(\tilde{X})$ as the largest singular value of $\tilde{X}$. The head tuning approach aims to find a separation plan $w^{\T}\tilde{x}=0$ to classify the encoded features $\tilde{x}_i$ into two categories. Here, $w=[v^\T,b]^\T=\A(S)$ is the classifier parameter. We denote $\hat{w}_{S}=[\hv_S^\T,\hb_S]^\T$ as the SVM solution trained on dataset $S$ and denote $\gamma_S$ as the maximal margin between separation plans and encoded features, which can be calculated as $\gamma_S=\frac{1}{\|\hat{v}_S\|}$ \cite{bishop2006pattern}. Similarly, we denote $\hat{w}_{S^i}=[\hv_{S^i}^\T,\hb_{S^i}]^\T$ as the SVM solution trained on the dataset $S^i$. Here, $\hv_S^\T,\hv_{S^i}^\T\in \mathbb{R}^{d_x}$ are the weights while $\hb_{S},\hb_{S^i}$ are the intercepts. Then, we present the theorem for head tuning as follows.

\begin{restatable}[Stability Bound for Head Tuning]{theorem}{mainthm}
\label{thm:mainthm}
Given a linearly separable dataset $S$, suppose that the encoded features $E(x_i)$ are bounded as $\|E(x_i)\|\le B$, $\forall i \in \{1,\cdots,n\}$. Let $\gamma_S$ be the maximal margin between separation plan $\hw_S^{\T}\tilde{x}=0$ and encoded features $E(x_i)$. Suppose further that the model parameter $w$ is optimized by gradient descent with $t$ iterations and learning rate $\eta< 2\beta^{-1}\sigma_{\max}^{-1}(\tilde{X})$. Then, for some constants $C, \lambda, \nu$, the normalized leave-one-out model stability is upper bounded as
\begin{equation}
    \E_{S}\left[\left\|\frac{\A(S^{i})}{\|\A(S^{i})\|}-\frac{\A(S)}{\|\A(S)\|}\right\|\right]\le \frac{C\log \log t}{\log t}+\nu \max\big\{\sqrt{\frac{2}{\lambda n}\left(1+\frac {B} {\gamma_S}\right)}, \frac{B+\sqrt{B^2+8n\lambda(1+B/\gamma_S)}}{2n\lambda} \big\}.
\label{eqn:maineqn}
\end{equation}
\end{restatable}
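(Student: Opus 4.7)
The plan is to split the normalized stability via the triangle inequality through the max-margin SVM solutions on the two datasets, writing
\begin{equation*}
\left\|\tfrac{\A(S^{i})}{\|\A(S^{i})\|}-\tfrac{\A(S)}{\|\A(S)\|}\right\| \le \left\|\tfrac{\A(S)}{\|\A(S)\|}-\tfrac{\hat w_S}{\|\hat w_S\|}\right\| + \left\|\tfrac{\hat w_S}{\|\hat w_S\|}-\tfrac{\hat w_{S^i}}{\|\hat w_{S^i}\|}\right\| + \left\|\tfrac{\hat w_{S^i}}{\|\hat w_{S^i}\|}-\tfrac{\A(S^{i})}{\|\A(S^{i})\|}\right\|.
\end{equation*}
This reduces the problem to two essentially independent pieces: an optimization-side piece measuring how close the $t$th gradient descent iterate is to the max-margin classifier on a fixed dataset, and a statistical-side piece measuring how the max-margin classifier itself shifts when one training sample is removed.

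For the optimization-side terms (first and third summands above), I would invoke the implicit-bias theorem of \citet{soudry2018implicit}: under the stated step-size condition $\eta<2\beta^{-1}\sigma_{\max}^{-1}(\tilde X)$, smoothness of $\ell$, and monotone decay to zero, the normalized iterate $w_t/\|w_t\|$ converges in direction to the $\ell_2$ max-margin SVM solution at the rate $O(\log\log t/\log t)$. Applying this rate to both the original trajectory on $S$ and the perturbed trajectory on $S^i$, and absorbing the two constants into a single $C$, yields the first summand $C\log\log t/\log t$ of the stated bound.

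The bulk of the work is the middle piece, the SVM leave-one-out sensitivity $\|\hat w_S/\|\hat w_S\|-\hat w_{S^i}/\|\hat w_{S^i}\|\|$. I would realize the hard-margin SVM as the limit of the Tikhonov-regularized hinge-loss objective $\tfrac{1}{n}\sum_i\ell_{\mathrm{hinge}}(w^\T\tilde x_iy_i)+\tfrac{\lambda}{2}\|w\|^2$, which is $\lambda$-strongly convex, and then run a Bousquet--Elisseeff style stability argument: subtracting the first-order optimality conditions at $\hat w_S$ and $\hat w_{S^i}$, taking an inner product with $\hat w_S-\hat w_{S^i}$, and using $\|\tilde x_i\|\le\sqrt{B^2+1}$ together with the margin identity $\gamma_S=1/\|\hat v_S\|$ gives $\|\hat w_S-\hat w_{S^i}\|^2\le\tfrac{2}{\lambda n}(1+B/\gamma_S)$, i.e.\ the first branch of the maximum. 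Expanding the same inner-product identity without discarding the cross term produces a quadratic inequality in $\|\hat w_S-\hat w_{S^i}\|$ whose solution is $(B+\sqrt{B^2+8n\lambda(1+B/\gamma_S)})/(2n\lambda)$, the second branch; taking the larger of the two covers both small- and large-$\lambda$ regimes. Finally, the lower bound $\|\hat w_S\|,\|\hat w_{S^i}\|\ge 1/\gamma_S$ lifts the Euclidean bound to a bound on unit vectors, at the cost of a multiplicative factor absorbed into $\nu$.

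The main obstacle, and the most delicate bookkeeping step, is the conversion from absolute to normalized distances combined with the matching of constants across the two summands: the implicit-bias bound is already directional, while the SVM perturbation bound is naturally Euclidean and must be lifted to the sphere using the margin. Tracking how $\gamma_S$ enters through this lifting, verifying that the maximum of the two branches dominates the true directional perturbation uniformly in admissible $\lambda$, and confirming that the implicit-bias constant $C$ is compatible with the assumed smoothness of $\ell$ and the step-size window, is where one must be most careful. Taking expectation over $S$ and noting that the bound holds uniformly in $i\in\{1,\dots,n\}$ then yields the claim.
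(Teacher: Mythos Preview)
Your proposal is correct and follows essentially the same route as the paper: triangle inequality through the two SVM solutions, the Soudry et al.\ implicit-bias rate $O(\log\log t/\log t)$ for the two iterate-to-SVM terms, a strong-convexity stability argument on the regularized hinge objective for $\|\hat w_S-\hat w_{S^i}\|$, and the Lipschitz bound on $x\mapsto x/\|x\|$ (with the margin supplying the norm lower bound) to lift to unit vectors. The only minor difference is in the mechanics of the SVM piece: the paper works at the level of function values (the $\lambda$-strong-convexity inequality $\LL_S(\hat w_{S^i})-\LL_S(\hat w_S)\ge\lambda\|\hat w_{S^i}-\hat w_S\|^2$ combined with a pointwise hinge-loss bound), and the two branches of the $\max$ arise from a case split on which term in that hinge-loss bound dominates, rather than from your ``discard/keep the cross term'' dichotomy on an optimality-condition identity; but the two derivations are equivalent and yield the same pair of expressions.
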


The proof can be found in \Cref{sec:A-mainthm}. This theorem is based on \citet{soudry2018implicit}'s theory that training a linear model on linearly separable data will converge to the direction of a max-margin classifier (SVM) if it is trained with the gradient descent method. To give a whole picture of analyzing the whole procedure, we use the telescope sum to incorporate the gap between two max-margin classifiers when trained on different datasets ($S$ and $S^i$), as well as the gap between parameter $w_t$ and the corresponding max-margin classifier $\hat{w}_S$. Specifically, the first term ($\frac{C\log \log t}{\log t}$) in \Cref{eqn:maineqn} indicates how the parameter $w_t$'s direction converges to the corresponding max-margin classifier $\hat{w}_S$'s direction at the $t$th step. The second term $\nu \max\big\{\sqrt{\frac{2}{\lambda n}\left(1+\frac {B} {\gamma_S}\right)}, \frac{B+\sqrt{B^2+8n\lambda(1+B/\gamma_S)}}{2n\lambda} \big\}$ measures the direction discrepancy between the two max-margin classifiers trained with the datasets $S$ and $S^i$. It can be observed from \Cref{thm:mainthm} that increasing the number of iterations $t$ can help stabilize the training procedure. This phenomenon has already been empirically observed by \citet{mosbach2020stability} in extensive experiments. Our theory further gives the intuition and theoretical understanding of why increasing the iteration number stabilizes the training procedure. The first term in \Cref{thm:mainthm} indicates that, as the training step increases, the model parameter $w_t$'s direction is closer to the corresponding max-margin classifier $\hat{w}_S$'s direction. Unfortunately, the rate $\mathcal{O}(\frac{C\log \log t}{\log t})$ converges relatively slowly \cite{soudry2018implicit}. We will derive a new corollary based on \Cref{thm:mainthm} and design a novel multi-head loss to accelerate the convergence rate in \Cref{sec:mhloss}. In addition, increasing the sample size $n$ can help stabilize the model. This observation is the same as that observed in \Cref{thm:propgd}. It shows that this method is theoretically effective under both settings. Moreover, if the encoded representation $E(x_i)$ has large margin $\gamma_S$, the model will be more stable. This observation is also very intuitive. If the margin is large, it becomes easier to find a separation plane to separate the data, while a small perturbation of the plane can hardly interfere with the classification results. We will propose a novel max margin regularizer (\Cref{sec:mmr}) based on this observation. Lastly, using a smaller learning rate is a necessary condition for \Cref{thm:mainthm} to be held. This is because only if the stepsize is small enough, the weight can be guaranteed to converge to the max-margin classifier weight. It should also be noted that the widely used Descent Lemma (\Cref{lemma:descent}) implies that a smaller learning rate helps improve the stability, as it is a sufficient condition to guarantee that the model has sufficient descent for each iteration step. It prevents the parameter from jumping to other local minima, making it more likely to converge to the same local minimum.

We further derive a corollary of \Cref{thm:mainthm} to show how the Lipschitz constant controls the stability in the head tuning setting. We simplify the distance between parameters trained on two datasets to incorporate the Lipschitz constant $L$.
\begin{restatable}{corollary}{thmlnsr}
\label{thm:thmlnsr}
Given a linearly separable dataset $S$, suppose that the encoded features $E(x_i)$ are bounded as $\|E(x_i)\|\le B$, $\forall i \in \{1,\cdots,n\}$. Suppose further that the model parameter $w$ is optimized by gradient descent with $t$ iterations and learning rate $\eta< 2\beta^{-1}\sigma_{\max}^{-1}(\tilde{X})$. For some constants $C, \lambda, \nu$, the normalized leave-one-out model stability is upper bounded as
\begin{equation}\E_{S}\left[\left\|\frac{\A(S^{i})}{\|\A(S^{i})\|}-\frac{\A(S)}{\|\A(S)\|}\right\|\right]\le \frac{C\log \log t}{\log t}+\nu \frac{L }{\lambda n}.\end{equation}
\end{restatable}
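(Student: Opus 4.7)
The plan is to reuse the decomposition and most of the structural argument from the proof of \Cref{thm:mainthm}, replacing only the piece that bounds the distance between the two max-margin classifiers. Let $w_t$ and $w'_t$ denote the iterates produced by gradient descent on $S$ and on $S^i$ respectively, and let $\hat{w}_S, \hat{w}_{S^i}$ be the corresponding SVM solutions. A triangle inequality on the normalized directions gives
\begin{equation*}
\left\|\frac{w_t}{\|w_t\|}-\frac{w'_t}{\|w'_t\|}\right\|\le \left\|\frac{w_t}{\|w_t\|}-\frac{\hat{w}_S}{\|\hat{w}_S\|}\right\|+\left\|\frac{\hat{w}_S}{\|\hat{w}_S\|}-\frac{\hat{w}_{S^i}}{\|\hat{w}_{S^i}\|}\right\|+\left\|\frac{\hat{w}_{S^i}}{\|\hat{w}_{S^i}\|}-\frac{w'_t}{\|w'_t\|}\right\|,
\end{equation*}
which is precisely the telescoping already used for \Cref{thm:mainthm}.

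For the first and third terms, I would simply invoke the same convergence-to-max-margin result of \citet{soudry2018implicit} that yielded the $\frac{C\log\log t}{\log t}$ contribution in \Cref{thm:mainthm}; nothing about the assumption set has changed in a way that affects this rate, so the contribution is inherited verbatim after taking expectation.

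The key new step is the middle term, where I must bound $\|\hat{w}_S-\hat{w}_{S^i}\|$ using the Lipschitz constant $L$ rather than the margin/bound geometry $(B,\gamma_S)$. To do this, I would view $\hat{w}_S$ as the minimizer of a $\lambda$-strongly convex regularized empirical risk of the form $\frac{1}{n}\sum_i \ell(w^{\T}\tilde{x}_i y_i)+\lambda\Omega(w)$ (the same surrogate used implicitly inside the proof of \Cref{thm:mainthm}, which is why $\lambda$ appears there), and then apply the classical Bousquet--Elisseeff leave-one-out stability argument for regularized ERM: $L$-Lipschitzness of the loss together with $\lambda$-strong convexity yields $\|\hat{w}_S-\hat{w}_{S^i}\|\le \tfrac{2L}{\lambda n}$ up to a constant factor absorbed into $\nu$. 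Converting to the normalized directions costs at most another constant factor (since both normalized vectors are unit-length), so the middle term is $\mathcal{O}\!\left(\frac{L}{\lambda n}\right)$.

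The main obstacle will be matching bookkeeping rather than new mathematics: I need to verify that the $\lambda$ and the Lipschitz constant $L$ are exactly the strong-convexity and loss-Lipschitz parameters that appear in the regularized surrogate used for \Cref{thm:mainthm}, and that the passage from $\|\hat{w}_S-\hat{w}_{S^i}\|$ to the normalized difference does not introduce a margin-dependent factor that would reintroduce $\gamma_S$ or $B$. Once those constants are aligned and absorbed into $\nu$, summing the inherited $\frac{C\log\log t}{\log t}$ and the new $\nu L/(\lambda n)$ bound and taking expectation over $S$ yields the stated inequality.
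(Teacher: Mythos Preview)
Your proposal is correct and follows essentially the same route as the paper: the paper reuses the telescoping from \Cref{thm:mainthm}, keeps the $\frac{C\log\log t}{\log t}$ contribution from \citet{soudry2018implicit} unchanged, and for the middle term replaces the margin-based bound on $\|\hat w_S-\hat w_{S^i}\|$ by the standard strong-convexity/Lipschitz stability argument (citing \citet{shalev2014understanding}) to obtain $\|\hat w_S-\hat w_{S^i}\|\le L/(\lambda n)$, then applies \Cref{lemma:mlip} to absorb the normalization into $\nu$. Your one caveat is well placed: the constant $\nu=2/r$ from \Cref{lemma:mlip} depends on a lower bound for $\|\hat w_S\|$, which the paper simply treats as a fixed constant without further discussion, so your bookkeeping concern is handled in the paper exactly by absorbing it into $\nu$.
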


The proof can be found in \Cref{sec:A-thmlnsr}. It can be observed from \Cref{thm:thmlnsr} that if the function $f$ has smaller Lipschitz consistent $L$, training a model can be more stable. Intuitively, if the Lipschitz constant is small, it becomes less sensitive to data perturbation. As a result, the directions for the max-margin classifiers will be quite close to each other, which increases the stability. This conclusion is consistent with \Cref{thm:propgd} and has also been empirically examined by many previous works \cite{arora2018stronger,sanyal2019stable,hua2021noise,aghajanyan2020better} with extensive experiments. As a result, this strategy, in conjunction with the discussion in \Cref{sec:fulltune}, is theoretically justified in both full fine-tuning and head tuning settings.

\section{Methods}
In addition to being able to explain most of the observed empirical discoveries, our theory can also help in the design of effective and provable methods. Armed with our new theory, we propose three novel methods to stabilize the fine-tuning procedure, which further verify the correctness of our theory. First, based on \Cref{thm:mainthm}, we propose a Max Margin Regularizer (MMR) to maximize the representation margin between samples from different classes. Then, we prove \Cref{thm:thmmhl} based on \Cref{thm:mainthm}, which establishes the theoretical basis for our novel Multi-Head Loss (MHLoss). It utilizes a multi-head layer to accelerate the convergence rate and thus stabilize the fine-tuning procedure. Finally, based on \Cref{thm:propgd}, we propose a Self Unsupervised Re-Training (SURT) method to initiate fine-tuning from a point closer to $w_*$. We will conduct extensive experiments in \Cref{sec:exp} to verify our proposed methods.

\subsection{Max Margin Regularizer}\label{sec:mmr}
It can be observed from \Cref{thm:mainthm} that if the margin $\gamma_S$ between the encoded representation and the separation plane is large, the model will have better stability. Based on this intuition, we propose a novel Max Margin Regularizer (MMR) for fine-tuning to help maximize the margin. However, calculating the margin is quite computationally costly and the margin itself is not differentiable. To tackle this problem, we propose to maximize the distance between the center points of the two classes. Intuitively, if the distance between the class centers increases, the margin between the encoded representation and the separation plane will also likely to increase. We recall that given a training set $S$, the input $x_i\in S$ will be first encoded with the encoder $E$ as $E(x_i)$. Each category should contain at least one sample and MMR can then be represented as

\begin{equation}
    \mathcal{R}(S)=\frac{1}{1+\left \|\sum_{i=1}^n E(x_i)y_i\left ( \frac{1+y_i}{\sum_{j=1}^n (1+y_j)} + \frac{1-y_i}{\sum_{j=1}^n (1-y_j)} \right)\right \|}.
\end{equation}

Intuitively, the above calculates the center points for different classes and then gets the distance between them. We use the reciprocal of the distance as the regularization term to ensure that minimizing the regularization term will result in increased distance. We add a constant 1 on the denominator to increase numerical stability. Therefore, the final optimization target can be written as \begin{equation}\mathcal{L}_{\text{MMR}}(E,w)=\frac 1 n \sum_{i=1}^n \ell(w^{\T}\tilde{x}_iy_i)+\alpha \mathcal{R}(S),\end{equation}
where $\alpha$ is the weight parameter for $\mathcal{R}(S)$.

\subsection{Multi-Head Loss}\label{sec:mhloss}
As indicated in \citet{soudry2018implicit}, the convergence rate $\mathcal{O}(\frac{\log \log t}{\log t})$ for the first term in \Cref{eqn:maineqn} is quite slow as $t$ grows. As a result, the effect of raising $t$ to lower the bound gradually loses its apparent effect especially when $t$ is already very large. To further reduce this term, we propose a novel Multi-Head Loss (MHLoss). Specifically, instead of using one linear head to calculate the loss, we propose to use $H$ linear headers with the same shape simultaneously to calculate the loss and take the average of the outputs. In the training stage, the $h$th head ($h\in \{1,\cdots,H\}$) with parameter $w_h$ is trained separately by minimizing the loss $\ell(w^{\T}_h\tilde{x}_iy_i)$. The overall loss can be calculated as \begin{equation}\mathcal{L}_{\text{MH}}(E,w_1,\cdots,w_H)=\frac 1 {nH} \sum_{h=1}^H\sum_{i=1}^n \ell(w_h^{\T}\tilde{x}_iy_i).\end{equation}
In the testing stage, we can calculate the result for an input $x$ by averaging all the heads as $\frac 1 H \sum_{h=1}^H (w_h^{\T}\tilde{x}_i)=(\frac 1 H \sum_{h=1}^H w_h^{\T})\tilde{x}_i=\bar{w}^{\T}\tilde{x}_i$, where $\bar{w}^{\T}$ is the mean average of all $w_h$'s. It is interesting to note that the final model is a combination of several linear models, which is still a linear model without any extra structure added. We argue that this loss helps improve the stability, because it accelerates the convergence speed for the first term in \Cref{eqn:maineqn}. To theoretically prove this claim, we establish \Cref{thm:thmmhl}, which is based on \Cref{thm:mainthm}. Here, as indicated by \citet{soudry2018implicit} that all the weights $w_h$'s converge to the same direction as $\hw_S$, we mainly focus on the case where $\bar{w}$ is not orthogonal to $\hw_S$ during the training procedure, namely, $\bar{w}^\T \hw_S\ne 0$. It shows that this combination accelerates the converging speed and improves the stability if the training step is fixed.

\begin{restatable}[Stability Bound for Multi-Head Loss]{corollary}{thmmhl}
\label{thm:thmmhl}
Consider a mulit-head loss with $H$ heads, where $H>2+8\ln\frac 1 \delta$, $\delta\in (0, 1)$, and $\bar{w}^\T \hw_S\ne 0$. With the same assumptions as in \Cref{thm:mainthm}, for some constants $C, \xi, \nu$, with probability $1-\delta$, we have 
\begin{equation}
\resizebox{0.94\textwidth}{!}{
        $
        \E_{S}\left[\left\|\frac{\A(S^{i})}{\|\A(S^{i})\|}-\frac{\A(S)}{\|\A(S)\|}\right\|\right] \le \sqrt{\frac{2 +8\log \frac 1 \delta}{H}}\frac{C\xi\log \log t}{\log t}+
        \nu \max\big\{\sqrt{\frac{2}{\lambda n}\left(1+\frac {B} {\gamma_S}\right)}, \frac{B+\sqrt{B^2+8n\lambda(1+B/\gamma_S)}}{2n\lambda} \big\}.\label{eqn:mhloss}
        $%
        }
\end{equation}
\end{restatable}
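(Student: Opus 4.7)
The plan is to apply \Cref{thm:mainthm} to each of the $H$ heads trained on the same dataset $S$, and then exploit a vector concentration argument over the random initializations of the heads to sharpen the first (direction-convergence) term by a factor scaling like $1/\sqrt{H}$. The second term, which captures the discrepancy between the SVM directions $\hat{w}_S$ and $\hat{w}_{S^i}$, is independent of how many heads are used and is inherited verbatim from \Cref{thm:mainthm}.

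The first step is to unpack the decomposition underlying \Cref{thm:mainthm} (itself an instance of Soudry et al.'s implicit bias result): for each head $h\in\{1,\dots,H\}$, the iterate admits a splitting $w_h(t) = \hat{w}_S \log t + \tilde{w}_h(t)$ with $\|\tilde{w}_h(t)\|=\mathcal{O}(\log\log t)$, and it is precisely this residual that generates the $\frac{C\log\log t}{\log t}$ rate at which $w_h(t)/\|w_h(t)\|$ approaches $\hat{w}_S/\|\hat{w}_S\|$. Averaging over the heads gives $\bar w(t)=\hat w_S\log t+\bar{\tilde w}(t)$ with $\bar{\tilde w}(t)=\tfrac1H\sum_h \tilde w_h(t)$, so the job is to control $\|\bar{\tilde w}(t)\|$ more tightly than the per-head worst case.

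Next, since the $H$ heads share the data but are initialized independently, I would treat the residuals $\tilde w_h(t)$ as independent bounded random vectors whose randomness stems from the initialization, and apply a vector Hoeffding/Pinelis-type concentration inequality to conclude that, for $H>2+8\log(1/\delta)$, $\|\bar{\tilde w}(t)\|\le C\xi\log\log t\cdot\sqrt{(2+8\log(1/\delta))/H}$ with probability at least $1-\delta$. Here $\xi$ absorbs the per-head deterministic envelope and the tail constants from the concentration inequality. Translating back to a direction gap through a first-order expansion of the normalization map $u\mapsto u/\|u\|$ at $\hat w_S\log t$ then yields the first term $\sqrt{(2+8\log(1/\delta))/H}\cdot \frac{C\xi\log\log t}{\log t}$ in \Cref{eqn:mhloss}. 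The hypothesis $\bar w^\T\hat w_S\ne 0$ is what guarantees that the linearization is valid and the constant $\xi$ remains bounded; without it the normalization step could become singular. Finally, combining this improved direction-convergence term with the unchanged SVM-gap term from \Cref{thm:mainthm} via the triangle inequality closes the argument.

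The main obstacle is rigorously justifying the concentration step. Because every head is trained on the same data with the same optimizer, the only probabilistic content separating them is the initialization; one must verify that the implicit-bias residual $\tilde w_h(t)$ depends on this initialization in a way that is independent across $h$, has a controlled mean (after centering, absorbed into $\hat w_S\log t$ or into $\xi$), and has a sub-Gaussian tail with variance proxy compatible with the $(2+8\log(1/\delta))/H$ rate, which in turn forces the minimum-width requirement $H>2+8\log(1/\delta)$. Establishing such a clean distributional statement for the $\tilde w_h(t)$'s — rather than the per-trajectory deterministic bound that \Cref{thm:mainthm} supplies out of the box — is the delicate part of the proof; once it is in place, everything else reduces to Soudry et al.\ plus triangle inequalities.
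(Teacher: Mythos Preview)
Your proposal follows essentially the same architecture as the paper: bound each head's direction gap to $\hat w_S/\|\hat w_S\|$ via Soudry et al., apply a vector concentration inequality across the $H$ independently initialized heads to gain the $1/\sqrt{H}$ factor, and then add back the SVM-gap term from \Cref{thm:mainthm} by the triangle inequality. The second term is indeed inherited unchanged, exactly as you say.

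The main technical difference is in the decomposition. You work with Soudry's additive form $w_h(t)=\hat w_S\log t+\tilde w_h(t)$ and concentrate $\bar{\tilde w}(t)$. The paper instead projects each $w_h(t)$ orthogonally onto the span of $\hat w_S$, writing $w_h(t)=\tilde w_h(t)+r_h(t)$ with $r_h(t)\perp\hat w_S$, and shows that the direction gap for a single head is (up to lower order) $\|r_h(t)\|/\|\tilde w_h(t)\|$. The averaged direction gap is then rewritten as $\bigl\|\tfrac{1}{H}\sum_h \xi_h\,r_h(t)/\|\tilde w_h(t)\|\bigr\|$, where $\xi_h=H\|\tilde w_h(t)\|/\|\sum_h\tilde w_h(t)\|$; the hypothesis $\bar w^\T\hat w_S\ne 0$ is used precisely to keep the denominator nonzero and hence $\xi:=\max_h\xi_h$ finite. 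The paper also proves a separate lemma (\Cref{lemma:exp0}) establishing that the expectation of each normalized head equals $\hat w_S/\|\hat w_S\|$, which supplies the zero-mean input to the concentration step; it then invokes a Bernstein-type vector inequality (Kohler--Lucchi, Lemma~18) rather than Hoeffding/Pinelis, and this is what produces the specific constant $2+8\log(1/\delta)$ and the width requirement $H>2+8\ln(1/\delta)$.

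The obstacle you flag---that one needs the residuals to be genuinely mean-zero in order to concentrate---is exactly the content of \Cref{lemma:exp0} in the paper. Your ``absorb the mean into $\hat w_S\log t$ or into $\xi$'' is not quite enough on its own: if the mean of the additive residual had a component orthogonal to $\hat w_S$, averaging would not shrink it, and the $1/\sqrt{H}$ gain would fail. The orthogonal-projection decomposition together with \Cref{lemma:exp0} is what the paper uses to close this gap.
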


The proof can be found in \Cref{sec:A-thmmhl}. It can be observed from \Cref{thm:thmmhl} that the stability is bounded by a term that involves the head number $H$. As $H$ increases, the first term in \Cref{eqn:mhloss} decreases at the rate of $\mathcal{O}(\frac {1}{\sqrt{H}})$, which is better than simply using one head. The intuition behind the multi-head loss is also very straightforward. \Cref{lemma:exp0} shows that the expectation of the head parameter $w_h$ is SVM. As implied by the concentration property, if we take the average of these classifiers, we can get an averaged linear classifier closer to the max-margin classifier.

\subsection{Self Unsupervised Re-Training}\label{sec:surt} 
As discussed in \Cref{thm:propgd}, reducing the distance $\|w_0-w_*\|$ between the initialization weight $w_0$ and the solution weight $w_*$ reduces the stability upper bound. This observation inspires us to fine-tune a pre-trained model that is very close to the final model. To get such a pre-trained model, a straightforward idea is to utilize a model pre-trained on the same domain as the backbone model, because the feature encoder may have already been well-adapted to the specific domain and will not change too much to adapt to that domain during fine-tuning. Unfortunately, it is not possible for us to get such a well pre-trained model for an arbitrary domain. To solve this problem, we propose a novel Self Unsupervised Re-Training (SURT) method to first re-train the given pre-trained model with the same training corpus as the one used in the fine-tuning task. It is re-trained with the unsupervised mask language model \cite{devlin2019bert} task, which only needs the given training corpus without any annotated label. Then, we fine-tune the model based on the re-trained model with the given labeled data. It should be noted that many previous works have proposed domain-adaptive pre-training \cite{gururangan2020don,aghajanyan2021muppet,hendrycks2019using}. They re-train the model with an extra domain-specific corpus that is not always guaranteed to exist. Different from these models, our proposed SURT method directly re-trains the model with the original training set without the labels. It does not require finding extra corpus and is thus applicable to more domains. Also, to the best of our knowledge, our theoretical analysis is the first to show why using a re-trained model helps stabilize fine-tuning.

\section{Experiments}\label{sec:exp}

\subsection{Setup}
We evaluate our theoretical conclusions and newly proposed methods on both real-world GLUE/SuperGLUE datasets and synthetic classification datasets. Specifically, in \Cref{sec:mainexp}, we evaluate the methods with the widely used NLP benchmark datasets GLUE \cite{wang2018glue} and SuperGLUE \cite{wang2019superglue}. They contain several tasks including Corpus of Linguistic Acceptability (CoLA) \cite{warstadt2019neural}, Microsoft Research Paraphrase Corpus (MRPC) \cite{dolan2005automatically}, Recognizing Textual Entailment (RTE) \cite{dagan2005pascal,giampiccolo2007third,bentivogli2009fifth}, Commitment Bank (CB) \cite{de2019commitmentbank}, Winograd Schema Challenge (WSC) \cite{levesque2012winograd}, Quora Question Pairs (QQP) \cite{wang2018glue}, Stanford Sentiment Treebank (SST-2) \cite{socher2013recursive}, Winograd NLI (WNLI) \cite{levesque2012winograd}, BoolQ (Boolean Questions) \cite{clark2019boolq},  Multi-Sentence Reading Comprehension (MultiRC) \cite{khashabi2018looking} and Word-in-Context (WiC) \cite{pilehvar2019wic}. We follow the setting of many previous works \cite{phang2018sentence,lee2019mixout,dodge2020fine,xu2021raise} to use the original development set as our testing set since the original test set is only accessible via the submission link which contains a hard limit of the submission number. Moreover, we follow \citet{fu2022effectiveness} to split 90\% data from the original training set to serve as the new training set and we use the remaining 10\% samples as the new development set to tune model hyper-parameters like regularizer's weight, early stop epoch, etc. We use the early stop strategy to stop training if the performance on the new development set has no improvement for 20 consecutive evaluation epochs. We set $H=50$ for MHLoss in the main experiment (\Cref{sec:expmain}) and report the impact of using different $H$ in \Cref{sec:head-std}. The experiments are evaluated following the setting of \citet{wang2018glue,wang2019superglue}. In order to check the stability of the experiments, we run each experiment for 10 runs with different random seeds and report the mean scores and the standard deviations. Although our proposed methods are theoretically supported under different scenarios (full tuning and head tuning), we test them experimentally on both settings to show their capabilities of handling more scenarios. All the code is implemented based on the jiant framework \cite{phang2020jiant} with the RoBERTa-base \cite{liu2019roberta} model as the backbone model which is provided by the transformers\footnote{\url{https://huggingface.co/docs/transformers/model_doc/roberta}} toolkit \cite{wolf-etal-2020-transformers}. All the experiments are running on a cluster with NVIDIA A100 GPU cards. On the other hand, in \Cref{sec:simplecls}, we also conduct several experiments on synthetic classification tasks to show more details of how each factor affects the results. As discussed in \Cref{sec:headtuneanalysis}, the head tuning paradigm only tunes a linear head classifier on top of the embedded features. To give a clearer picture of how the methods work, we randomly generate several synthetic classification datasets containing features with respect to different requirements and validate some of our conclusions with them. These experiments can be used to show more details of the head tuning behaviors.

We compare our model with several widely used baseline models focusing on fine-tuning stability.
\textbf{FineTune} model is the standard full fine-tuning model \cite{devlin2019bert} that directly fine-tunes all the parameters of the backbone model together with the task-specific head on each task. 
\textbf{MixOut} model proposed by \citet{lee2019mixout} is another widely used fine-tuning method. It mixes the trained model parameters and the original model parameters according to a specific ratio.
\textbf{LNSR} model proposed by \citet{hua2021noise} uses a regularizer to diminish the Lipschitz constant of the prediction function to improve the stability. 

\subsection{Experiment Results for GLUE/SuperGLUE}
\label{sec:mainexp}

\begin{table*}[t]
    \centering
    \scriptsize
    \setlength{\tabcolsep}{1pt} 
    \resizebox{1.\textwidth}{!}{
        \scriptsize 
    \begin{tabular}{l|lllllllllll|l}
    \toprule
    {} &                             \multicolumn{1}{c}{CoLA} &                          \multicolumn{1}{c}{MRPC} &                              \multicolumn{1}{c}{RTE} &                            \multicolumn{1}{c}{CB} &                     \multicolumn{1}{c}{WSC} &                           \multicolumn{1}{c}{QQP} &                               \multicolumn{1}{c}{SST-2} &                                \multicolumn{1}{c}{WNLI} &                            \multicolumn{1}{c}{BoolQ} &                          \multicolumn{1}{c}{MultiRC} &                     \multicolumn{1}{c|}{WiC} &                                 \multicolumn{1}{c}{AVG} \\
    \hline
    FineTune &                                    59.21±2.32 &                              88.84±0.97 &                     \underline{76.21}±2.04 &                                 79.11±2.77 &            \underline{54.33}±5.48 &                                 39.67±8.13 &                                    93.04±0.48 &                                    54.65±3.44 &                              78.70±1.10 &                                    44.22±0.85 &                        68.07±1.19 &                                 66.91±2.61 \\
    MixOut   &                          \CCG{17.6}59.85±1.94 &        \CCG{16.2}\underline{89.42}±0.66 &                       \CCG{20.4}75.63±1.52 &                       \CCR{10.4}78.57±3.29 &              \CCR{28.0}51.92±6.88 &                       \CCG{10.0}39.67±8.13 &                          \CCG{11.2}93.13±0.42 &                          \CCG{29.8}54.75±2.45 &                    \CCG{16.4}78.54±0.78 &                 \CCG{10.6}\textbf{44.81}±0.82 &  \CCG{11.4}\underline{68.67}±1.12 &                       \CCG{11.2}66.82±2.55 \\
    LNSR     &  \CCG{23.4}\underline{60.69}±\underline{1.65} &        \CCG{16.4}89.21±\underline{0.65} &                       \CCG{31.4}74.26±0.97 &  \CCG{10.6}\textbf{80.71}±\underline{2.74} &              \CCG{26.0}52.79±4.68 &                       \CCG{11.2}39.61±8.07 &                 \CCG{12.0}\textbf{93.31}±0.38 &  \CCG{49.4}\underline{55.62}±\underline{1.47} &        \CCG{20.6}78.36±\underline{0.57} &                           \CCR{4.2}41.66±1.06 &      \CCR{3.0}\textbf{68.68}±1.34 &                       \CCG{19.4}66.81±2.14 \\
    MHLoss   &                           \CCR{1.0}59.64±2.37 &                    \CCG{13.8}88.43±0.78 &  \CCG{32.2}\textbf{77.11}±\underline{0.93} &                        \CCR{0.4}79.64±2.79 &     \CCG{24.6}\textbf{61.35}±4.75 &  \CCG{32.0}\underline{40.17}±\textbf{7.03} &                          \CCG{11.2}93.13±0.42 &        \CCG{53.4}\textbf{55.92}±\textbf{1.27} &  \CCG{21.4}\textbf{79.59}±\textbf{0.53} &                           \CCR{7.8}43.77±1.24 &  \CCG{14.6}67.51±\underline{0.96} &  \CCG{20.2}\textbf{67.84}±\underline{2.10} \\
    MMR      &                 \CCG{21.4}\textbf{60.96}±1.75 &  \CCG{17.6}\textbf{89.58}±\textbf{0.59} &              \CCG{39.8}75.59±\textbf{0.55} &           \CCR{20.8}\underline{80.00}±3.81 &  \CCG{48.2}54.13±\underline{3.57} &           \CCG{27.4}38.83±\underline{7.26} &     \CCG{13.4}\underline{93.18}±\textbf{0.31} &                          \CCG{32.6}54.79±2.31 &                     \CCR{2.6}76.96±1.23 &  \CCG{13.8}\underline{44.73}±\underline{0.66} &     \CCG{26.4}67.74±\textbf{0.37} &  \CCG{21.4}\underline{66.95}±\textbf{2.04} \\
    SURT     &                 \CCG{27.2}60.64±\textbf{1.46} &                    \CCG{11.2}89.35±0.91 &                       \CCG{30.8}76.03±1.00 &              \CCG{34.2}78.21±\textbf{1.56} &     \CCG{70.2}52.88±\textbf{2.47} &              \CCR{19.8}\textbf{40.42}±9.12 &  \CCG{12.4}\underline{93.18}±\underline{0.36} &                          \CCR{15.6}53.80±4.22 &        \CCG{15.8}\underline{78.86}±0.81 &                 \CCG{14.6}43.38±\textbf{0.62} &               \CCR{2.2}68.09±1.30 &                       \CCG{18.8}66.80±2.17 \\
    \bottomrule
    \end{tabular}
    }
    \caption{The main experiment. We mark the bests score with \textbf{bold} font and mark the second best scores with \underline{underline} mark. The gradient of \setlength{\fboxsep}{1pt}\colorbox{mina-green!30}{green} indicates the standard deviation improvement compared to the FineTune model (the deeper the more) while the gradient of \colorbox{red!30}{red} indicates the standard deviation degradation. The distribution of the scores is further thoroughly depicted in \Cref{sec:varbox} with a violin plot.}
    \label{tab:main}
\end{table*}

\begin{figure*}[t]
    \makebox[\linewidth][c]{
    \centering
    \begin{minipage}[t]{0.30\textwidth}
    \centering
    \includegraphics[width=0.99\columnwidth]{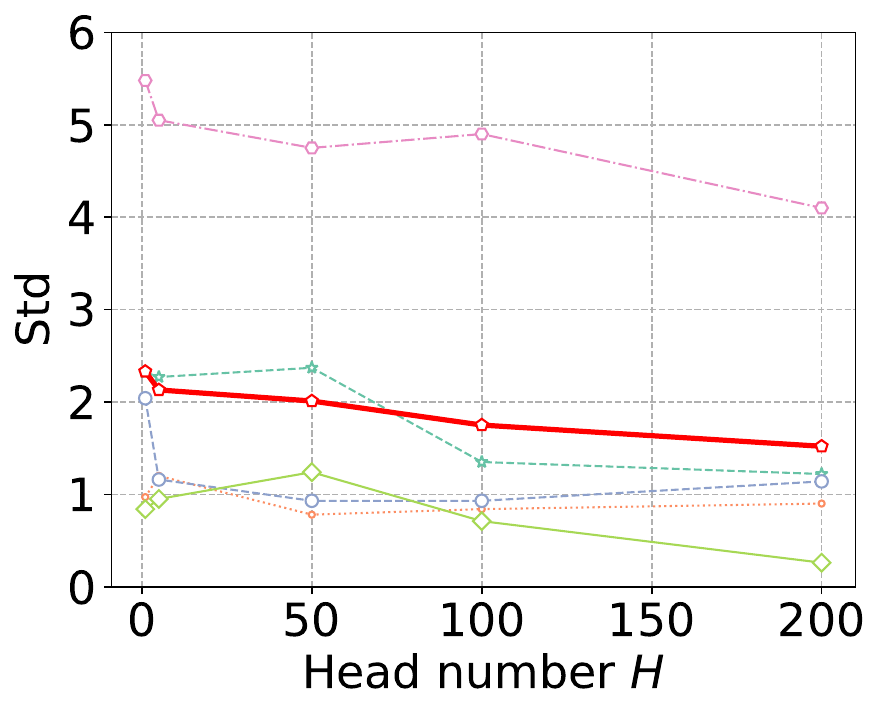}
    \caption{Impact of head number.}
    \label{fig:head-std}
    \end{minipage}
    \begin{minipage}[t]{0.301\textwidth}
        \centering
        \includegraphics[width=0.99\columnwidth]{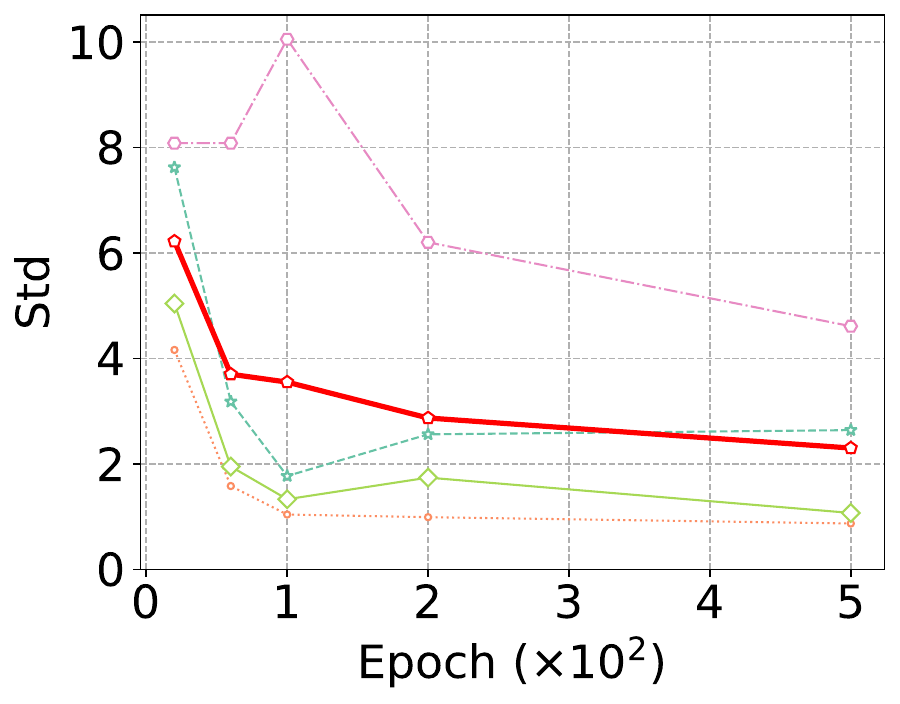}
    \caption{Impact of training epoch.}
    \label{fig:epoch-std}
    \end{minipage}
    \begin{minipage}[t]{0.42\textwidth}
        \centering
        \includegraphics[width=0.99\columnwidth]{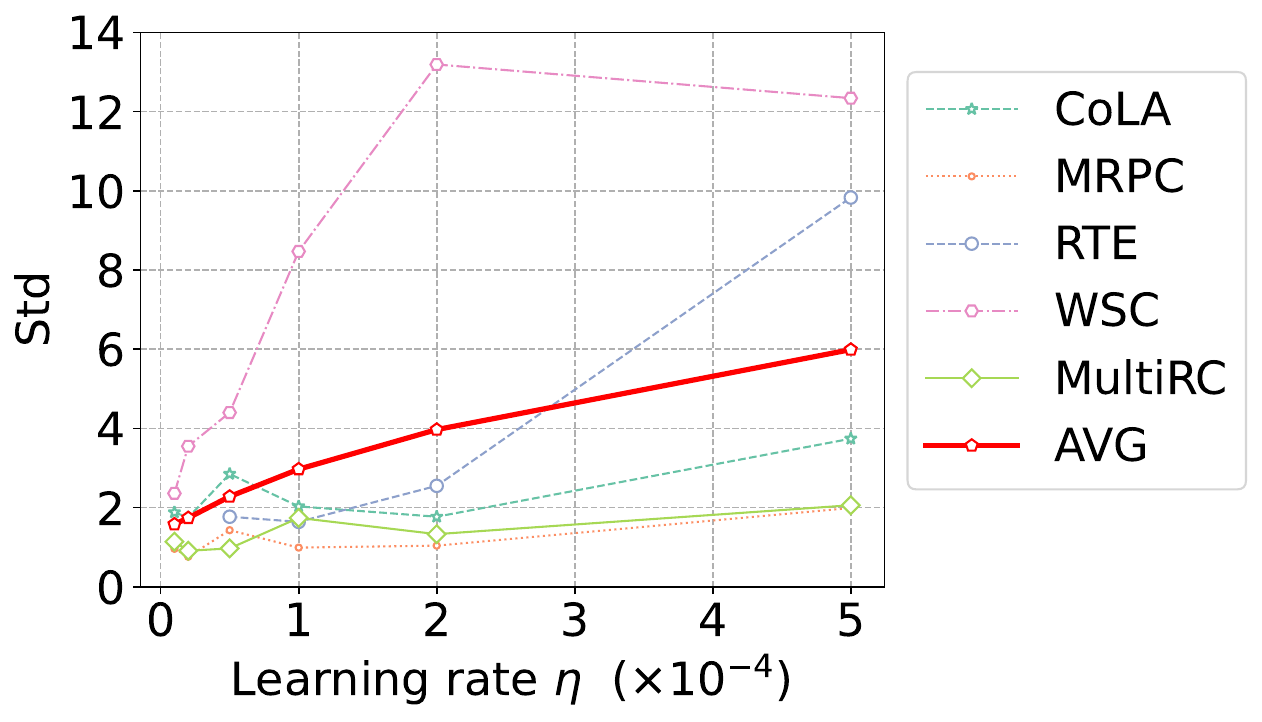}
        \caption{Impact of learning rate.}
        \label{fig:lr-std}
        \end{minipage}
    }
\end{figure*}

\begin{figure*}[t]
    \makebox[\linewidth][c]{
    \centering
    \begin{minipage}[t]{0.46\textwidth}
        \centering
        \includegraphics[width=0.99\columnwidth]{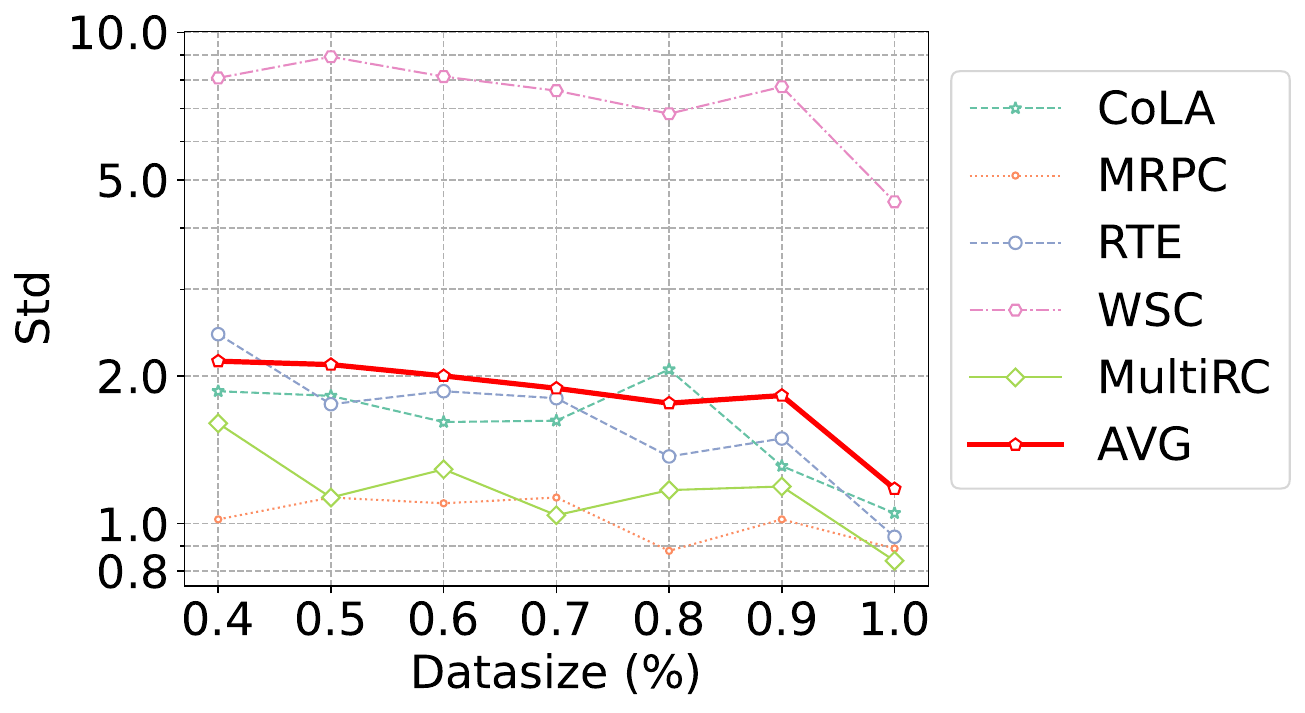}
    \caption{Influence of sample count.}
    \label{fig:datasize-std}
    \end{minipage}
    \begin{minipage}[t]{0.5\textwidth}
        \centering
        \includegraphics[width=0.99\columnwidth]{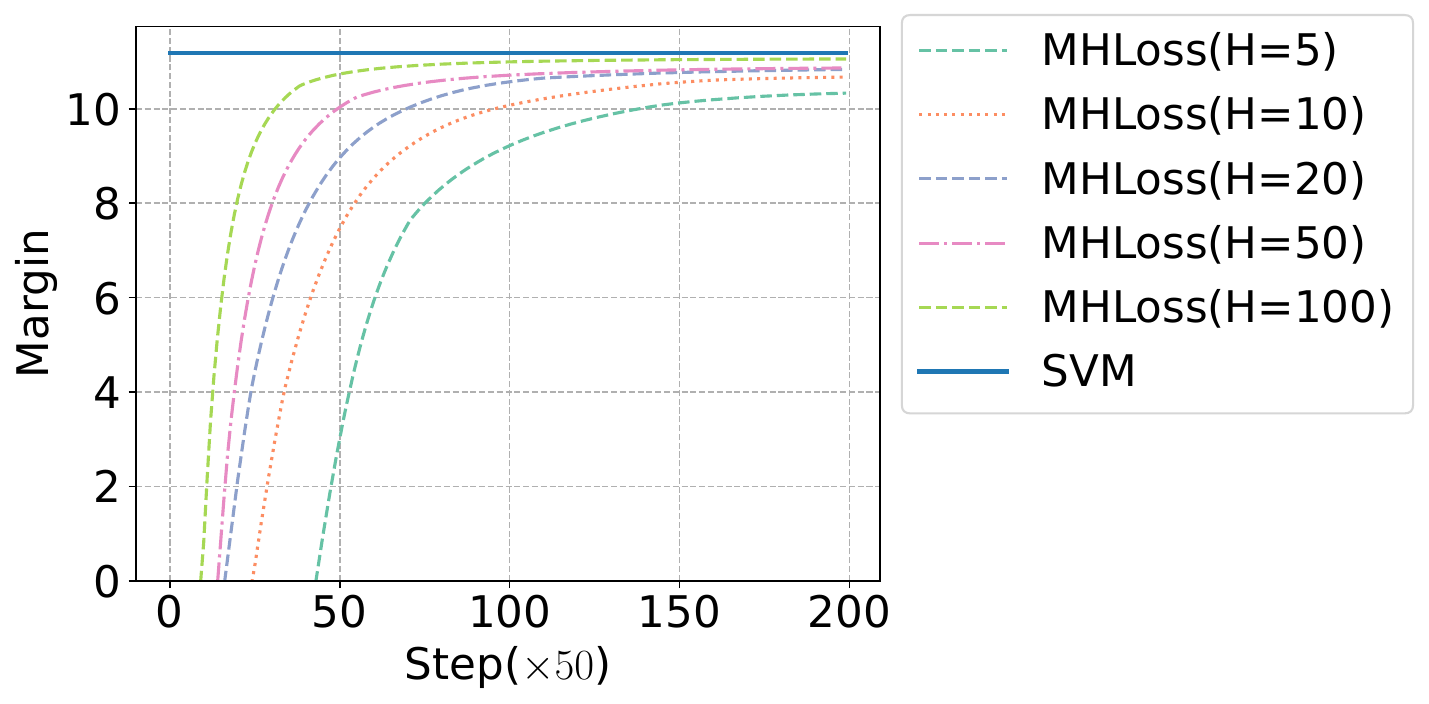}
    \caption{Influence of head number $H$.}
    \label{fig:head-coverage}
    \end{minipage}
    }
\end{figure*}

\subsubsection{Main Experiment}\label{sec:expmain} The main experiment results are shown in \Cref{tab:main}. It can be concluded from the results that using MHLoss helps stabilize the fine-tuning procedure since the output is more stable than that of the vanilla FineTune model. This observation verifies our theoretical analysis in \Cref{sec:mhloss}, where we prove that using more heads helps reduce the stability upper bound. We will conduct more experiments in \Cref{sec:head-std} to show how the head number $H$ affects the stability. Next, MMR strengthens the stability in many tasks, which verifies the prediction of \Cref{thm:mainthm} and our analysis in \Cref{sec:mmr}. It shows that increasing the margin of the encoded features can help improve stability. We will conduct more intuitive experiments to further verify this observation. Our proposed SURT model also outperforms the FineTune model by a notable margin in the standard deviation. This observation further shows the correctness of \Cref{thm:propgd}'s prediction discussed in \Cref{sec:surt}, where we prove that reducing the parameters' distance helps diminish the stability upper bound. Lastly, the LNSR model's results show that reducing the Lipschitz constant also helps improve stability. These results are consistent with the experiment results in \citet{hua2021noise} and also verify the correctness of our theoretical analysis in \Cref{thm:propgd} and \Cref{thm:thmlnsr}.

\begin{table}[t]
    \centering
    \scriptsize
    \setlength{\tabcolsep}{1pt} 
    \begin{tabular}{l|lllll|l}
    \toprule
    {} &                             \multicolumn{1}{c}{CoLA} &                             \multicolumn{1}{c}{MRPC} &                     \multicolumn{1}{c}{RTE} &                                  \multicolumn{1}{c}{CB} &                           \multicolumn{1}{c|}{WSC} &                           \multicolumn{1}{c}{AVG} \\
    \hline
    FineTune &                        \textbf{40.03}±1.69 &               \textbf{77.77}±0.73 &            \underline{56.72}±1.16 &                              75.52±6.31 &                              58.75±3.86 &                                    61.76±2.75 \\
MixOut   &                        \CCR{8.0}39.64±2.09 &              \CCG{11.6}77.09±0.65 &               \CCR{0.6}56.28±1.19 &        \CCG{24.6}76.43±\underline{5.58} &                    \CCG{10.0}58.75±3.86 &                          \CCG{11.6}61.64±2.67 \\
LNSR     &                        \CCR{4.6}39.43±1.92 &     \CCG{17.6}77.48±\textbf{0.35} &     \CCG{14.8}\textbf{57.44}±0.92 &                    \CCG{15.2}76.96±6.05 &                    \CCG{18.6}58.85±3.43 &                          \CCG{14.4}62.03±2.53 \\
MHLoss   &  \CCG{30.4}\underline{39.91}±\textbf{0.67} &               \CCR{7.2}77.42±1.09 &  \CCG{15.2}56.41±\underline{0.90} &  \CCG{26.2}\textbf{78.75}±\textbf{5.50} &  \CCG{64.4}\textbf{63.46}±\textbf{1.14} &        \CCG{27.8}\textbf{63.19}±\textbf{1.86} \\
MMR      &                        \CCR{7.8}39.11±2.08 &  \CCG{13.4}\underline{77.63}±0.56 &     \CCG{18.2}56.68±\textbf{0.75} &                    \CCG{13.6}78.57±6.13 &        \CCG{49.2}61.63±\underline{1.90} &  \CCG{19.4}\underline{62.73}±\underline{2.28} \\
SURT     &           \CCG{21.4}39.25±\underline{1.12} &  \CCG{13.6}77.44±\underline{0.55} &              \CCR{11.6}55.11±1.74 &           \CCG{22.8}\textbf{78.75}±5.67 &        \CCG{36.2}\underline{62.21}±2.55 &                          \CCG{18.4}62.55±2.33 \\
    \bottomrule
    \end{tabular}
    \caption{Head tuning stability analysis. 
    } 
    \label{tab:headtune}

\end{table}

\subsubsection{Impact of Head Number}\label{sec:head-std}  \Cref{thm:thmmhl} theoretically predicts that using MHLoss helps improve the fine-tuning stability, which has already been experimentally verified in \Cref{sec:expmain}. To further analyze how the head number $H$ affects the stability, we report standard deviations on several GLUE tasks with respect to different head numbers $H$ ranging in $\{1, 5, 50, 100, 200\}$. The results are shown in \Cref{fig:head-std}. It can be concluded from the results that as the head number $H$ increases, the standard deviation decreases. It shows the effectiveness of our proposed MHLoss and also empirically verifies the correctness of our theoretical prediction.

\subsubsection{Impact of Training Epoch}\label{sec:epoch-std} \Cref{thm:mainthm} indicates that training more epochs stabilizes fine-tuning, which has also been empirically verified by \citet{mosbach2020stability}. We further experimentally verify these results by fine-tuning the model with different epochs on several GLUE tasks and the results are shown in \Cref{fig:epoch-std}. We can conclude from the results that as the training epochs increase, the stability of nearly all tasks correspondingly improves. This observation further corroborates our theoretical conclusions.

\subsubsection{Impact of Learning Rate}\label{sec:lr-std} Different from other factors that directly reduce the stability upper bound, using a smaller learning rate is a necessary condition for \Cref{thm:mainthm} to hold. We conduct new experiments to show how the stability changes as different learning rates are applied. The results are shown in \Cref{fig:lr-std}. It can be concluded from the experiments that as the learning rate $\eta$ decreases, the model achieves better stability. This observation further justifies the correctness of our theoretical prediction and also validates the results observed in \citet{mosbach2020stability}.

\begin{table}[t]
    \centering
    \scriptsize
    \setlength{\tabcolsep}{1pt} 
    \resizebox{0.49\textwidth}{!}{
    \begin{tabular}{l|lllll|l}
    \toprule
    {} &                             \multicolumn{1}{c}{CoLA} &                             \multicolumn{1}{c}{MRPC} &                     \multicolumn{1}{c}{RTE} &                                  \multicolumn{1}{c}{CB} &                           \multicolumn{1}{c|}{WSC} &                           \multicolumn{1}{c}{AVG} \\
    \hline
    FineTune &                                 60.17±1.84 &                                    88.61±1.01 &            \underline{75.02}±1.56 &                              83.04±3.93 &                                 57.88±5.42 &                              72.94±2.75 \\
    MixOut   &                       \CCG{16.6}60.24±1.51 &              \CCG{13.4}88.59±\underline{0.84} &               \CCR{4.6}74.91±1.79 &        \CCG{30.0}83.75±\underline{2.93} &            \CCR{0.2}\underline{58.75}±5.43 &        \CCG{15.0}\underline{73.25}±2.50 \\
    LNSR     &           \CCG{21.6}60.11±\underline{1.26} &  \CCG{13.4}\underline{88.88}±\underline{0.84} &              \CCR{15.0}72.27±2.31 &                    \CCG{24.6}84.46±3.20 &                       \CCG{22.2}55.29±4.81 &                    \CCG{15.4}72.20±2.48 \\
    MHLoss   &  \CCG{22.8}\underline{60.35}±\textbf{1.20} &                          \CCG{13.2}88.41±0.85 &     \CCG{16.8}74.77±\textbf{1.22} &                    \CCG{28.4}85.00±3.01 &  \CCG{32.4}\textbf{61.92}±\underline{4.30} &  \CCG{22.6}\textbf{74.09}±\textbf{2.12} \\
    MMR      &              \CCG{19.0}\textbf{60.84}±1.39 &                 \CCG{12.4}\textbf{89.66}±0.89 &  \CCG{16.2}71.05±\underline{1.25} &        \CCG{20.0}\underline{85.54}±3.43 &              \CCG{33.2}56.73±\textbf{4.26} &        \CCG{20.2}72.76±\underline{2.24} \\
    SURT     &                       \CCG{13.2}60.04±1.68 &                 \CCG{16.2}88.67±\textbf{0.70} &     \CCG{15.4}\textbf{75.16}±1.29 &  \CCG{40.2}\textbf{86.79}±\textbf{2.42} &                        \CCR{1.0}54.13±5.47 &                    \CCG{18.8}72.96±2.31 \\
    \bottomrule
    \end{tabular}
    }
    \caption{Results for data perturbation stability experiments.
    }  
    \label{tab:datastablity}
\end{table}

\subsubsection{Impact of Sample Count}\label{sec:datasize-std} It has been indicated in both Theorems \ref{thm:propgd} and \ref{thm:mainthm} that using more training samples helps stabilize the fine-tuning procedure. We conduct a new experiment to verify this prediction by sampling the training set with ratios ranging in $\{40\%, 50\%, 60\%, 70\%, 80\%, 90\%, 100\%\}$. Then, we fine-tune the model with the sampled data and the results are shown in \Cref{fig:datasize-std}. It can be concluded that as we get more and more training samples, the models become more and more stable, which also corroborates our theory.

\subsubsection{Head Tuning Stability Analysis}\label{sec:headtune} To show that our proposed methods are also applicable to the head tuning settings, we run the same experiments again in the head tuning manner and the results are shown in \Cref{tab:headtune}. In this setting, the pre-trained encoders are fixed and only the head layer's parameters are fine-tuned. It can be observed from the results that in the head tuning setting, our proposed methods also improve stability. This result also validates our theoretical prediction.

\subsubsection{Data Perturbation Stability}
\label{sec:datastablity} In the main experiment (\Cref{sec:mainexp}), we perturb the training data by switching the random seeds. It is still unknown whether it will remain stable if we impose some perturbation on the training data. To verify this kind of stability, we conduct a new experiment by training the model on several datasets with 10\% of their training samples randomly removed. The results are shown in \Cref{tab:datastablity}. It can be concluded from the results that our proposed new methods including MHLoss, MMR, and SURT can help stabilize the training procedure with perturbation on the training data. This observation further extends our methods to scenarios with perturbation on input data. Furthermore, existing methods MixOut and LNSR stabilize fine-tuning compared with the FineTune model, which also supports our theoretical prediction.

\subsubsection{Large Pre-Trained Model Stability}\label{sec:robertalarge} In the above discussion, we conduct extensive experiments on RoBERTa base model. To further explore whether our theoretical results are applicable for fine-tuning a large backbone model, we run several tasks with the RoBERTa-large model \cite{liu2019roberta}. The results are shown in \Cref{tab:robertalarge}. It can be concluded from the results that the scores for most of the experiments improve as we use a much larger backbone model. Besides, all stabilizing methods including MHLoss, MMR, SURT, and previously proposed LNSR reduce the variance of the FineTune model. This observation shows that our proposed new methods are also applicable to a large pre-trained model. This experiment also extends the application scenarios of our proposed methods.

\begin{table}[t]
    \centering
    \scriptsize
    \setlength{\tabcolsep}{1pt} 
    \begin{tabular}{l|lllll|l}
    \toprule
    {} &                             \multicolumn{1}{c}{CoLA} &                             \multicolumn{1}{c}{MRPC} &                     \multicolumn{1}{c}{RTE} &                                  \multicolumn{1}{c}{CB} &                           \multicolumn{1}{c|}{WSC} &                           \multicolumn{1}{c}{AVG} \\
    \hline
    FineTune     &                        64.63±2.19 &                                    90.41±0.84 &                        83.42±2.05 &                        84.95±5.13 &                                    67.60±3.25 &                              78.20±2.69 \\
    MixOut       &  \CCG{16.4}\underline{65.19}±1.87 &                           \CCR{0.8}90.48±0.88 &               \CCR{5.6}84.04±2.33 &  \CCG{23.0}85.54±\underline{4.48} &                 \CCG{26.4}66.35±\textbf{2.43} &                    \CCG{15.8}78.32±2.40 \\
    LNSR         &  \CCG{32.4}64.80±\underline{1.07} &                 \CCG{12.6}\textbf{90.57}±0.71 &  \CCG{29.6}82.95±\underline{1.07} &              \CCG{16.8}84.13±4.79 &                          \CCR{10.4}66.06±3.77 &                    \CCG{18.2}77.70±2.28 \\
    MHLoss       &     \CCG{29.0}\textbf{65.48}±1.24 &                 \CCG{17.8}90.44±\textbf{0.45} &     \CCG{31.6}84.26±\textbf{0.97} &  \CCG{15.2}\underline{86.61}±4.87 &                          \CCG{17.0}64.33±2.90 &        \CCG{22.0}78.22±\underline{2.09} \\
    MMR          &              \CCG{18.8}63.45±1.75 &  \CCG{15.6}\underline{90.54}±\underline{0.56} &     \CCG{21.2}\textbf{84.51}±1.49 &     \CCG{31.4}85.36±\textbf{4.06} &  \CCG{25.2}\underline{68.17}±\underline{2.49} &  \CCG{22.4}\textbf{78.41}±\textbf{2.07} \\
    SURT &     \CCG{33.0}60.75±\textbf{1.04} &                          \CCG{12.8}90.09±0.70 &  \CCG{19.6}\underline{84.30}±1.57 &      \CCR{1.0}\textbf{87.76}±5.18 &                 \CCR{17.0}\textbf{68.94}±4.10 &        \CCG{13.4}\underline{78.37}±2.52 \\
    \bottomrule
    \end{tabular}
    \caption{Results for fine-tuning a RoBERTa large model. 
    }    
    \label{tab:robertalarge}
\end{table}


\subsection{Experiment Results for Synthetic Classification}\label{sec:simplecls}

To provide a more complete picture of how internal factors (such as margins and distances) affect stability, we manually create a series of synthetic binary classification tasks. These factors can be manually controlled in these datasets. The training samples for each task are randomly generated with regard to a particular factor level and are classified with a linear classifier. The synthetic datasets have many advantages. First, the value of each factor is controllable and we can easily show how the stability is influenced by different factor values. Besides, we can generate large numbers of different datasets to achieve better statistical significance.
If not particularly specified, we randomly generate 500 training sets and each training set contains 2,000 data points. As discussed in \Cref{sec:headtuneanalysis}, the normalized leave-one-out stability is more suitable for analyzing linear classifiers. We write $\|\hat{\Delta}\|=\left\|\frac{\hat{w}_S}{\|\hat{w}_S\|} - \frac{\hat{w}_{S^i}}{\|\hat{w}_{S^i}\|}\right\|$, where $\|\hat{\Delta}\|$ measures the normalized leave-one-out model stability as defined in \Cref{def:nstability}. We train linear regression models on each training set with the gradient descent method and report metrics accordingly.

\subsubsection{Impact of Head Number}\label{sec:head-coverage} We train the model with a linear regression model with the MHLoss. The head number $H$ ranges in $\{5, 10, 20, 50, 100\}$. The results are shown in \Cref{fig:head-coverage}, where the SVM model is a max-margin classifier. It can be concluded that using more heads leads to faster convergence than using fewer heads. This observation justifies the discussion of \Cref{thm:thmmhl} that MHLoss accelerates the convergence of the first term in \Cref{eqn:mhloss} to improve stability. Besides, all the linear regression models converge to a max-margin classifier. This observation gives a quick verification of the theory in \citet{soudry2018implicit}, which forms the basis of our theoretical analysis.

\subsubsection{The Impact of Sample Count}\label{sec:N-pertube} To further show how increasing the sample count contributes to stabilizing the fine-tuning procedure, we train models on synthetic datasets with different sample sizes and the results are shown in \Cref{fig:N-pertube}. It can be concluded that $\|\hat{\Delta}\|$ decreases as the sample size increases, which indicates a more stable training procedure. This observation further verifies \Cref{thm:propgd} and \ref{thm:mainthm}'s prediction.

\subsubsection{Impact of Sample Margin}\label{sec:margin-pertube} \Cref{thm:mainthm} indicates that increasing the margin between features can improve stability. To give a more intuitive view of how the margin influences the stability of the linear head, we conduct a new experiment to illustrate the relationship between the margin and stability. We manually adjust the data distribution to control the distance between the center points of the two generated sample classes and thus control the margin. Then, we calculate the margin with a simple SVM model and plot the relationship between the margin and the corresponding stability metric. The results are shown in \Cref{fig:margin-pertube}. It can be concluded that as the margin increases, the stability improves, which also justifies our theoretical prediction.

\begin{figure*}[t]
    \makebox[\linewidth][c]{
    \centering
    \begin{minipage}[t]{0.265\textwidth}
    \centering
    \includegraphics[width=0.99\columnwidth]{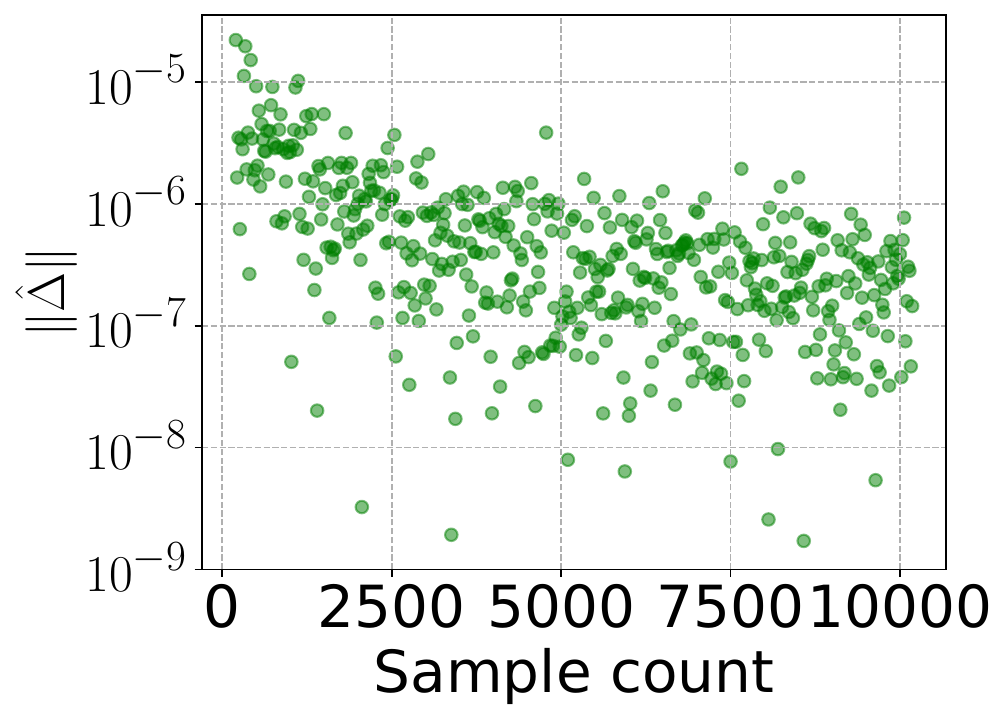}
    \begin{minipage}[t]{0.9\columnwidth}
    \caption{Influence of sample count}
    \label{fig:N-pertube}
    \end{minipage}
    \end{minipage}
    \begin{minipage}[t]{0.25\textwidth}
        \centering
        \includegraphics[width=0.99\columnwidth]{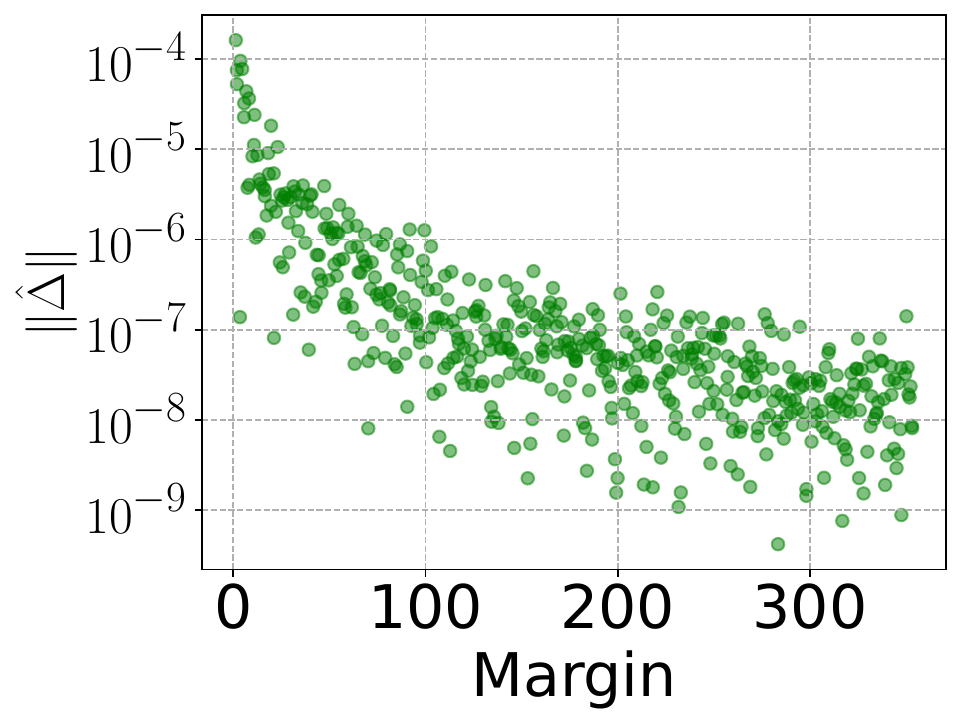}
    \begin{minipage}[t]{0.9\columnwidth}
    \caption{Influence of sample margin.}
    \label{fig:margin-pertube}
    \end{minipage}
    \end{minipage}
    \begin{minipage}[t]{0.243\textwidth}
        \centering
        \includegraphics[width=0.99\columnwidth]{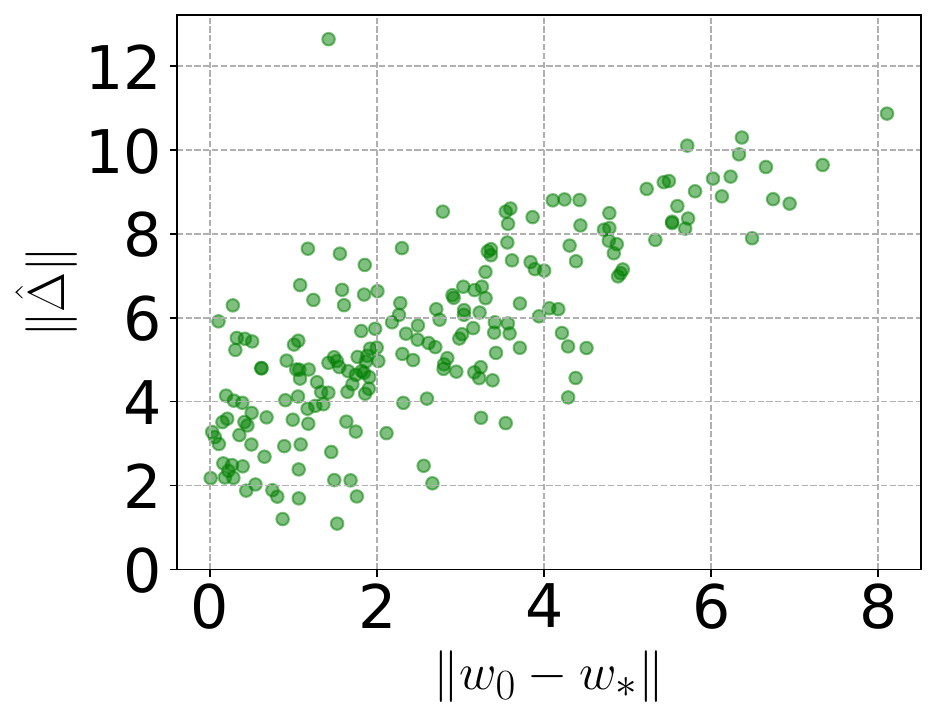}
    \begin{minipage}[t]{0.9\columnwidth}
    \caption{Influence of parameters distance.}
    \label{fig:dist-stable}
    \end{minipage}
    \end{minipage}
    \begin{minipage}[t]{0.251\textwidth}
        \centering
        \includegraphics[width=0.99\columnwidth]{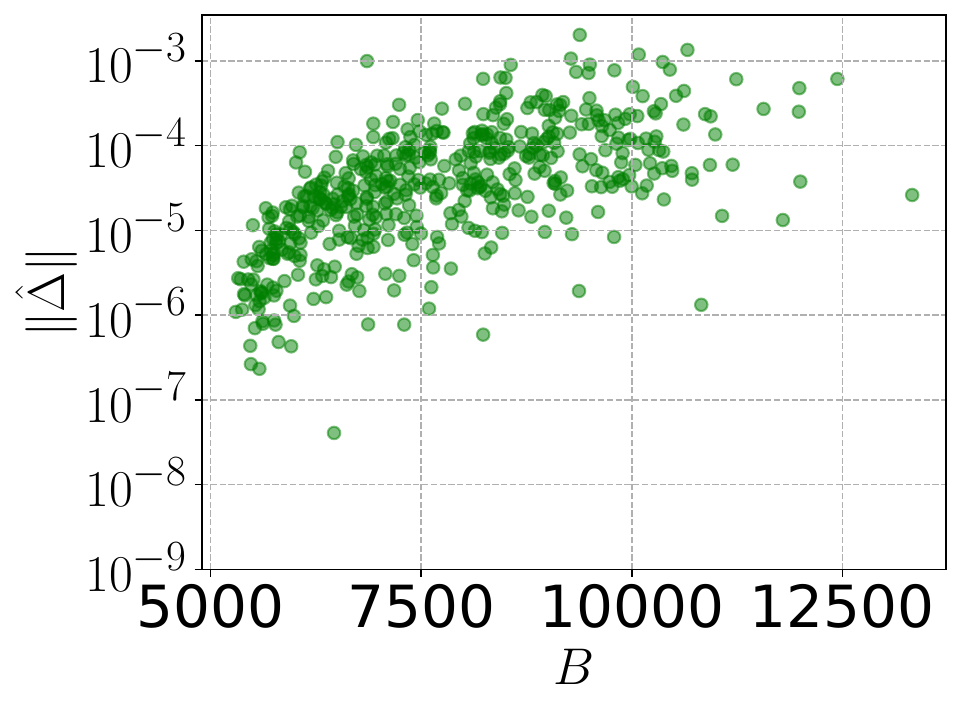}
    \begin{minipage}[t]{0.9\columnwidth}
    \caption{Influence of sample bound.}
    \label{fig:Bs-pertube}
    \end{minipage}
    \end{minipage}
    }
\end{figure*}

\subsubsection{Impact of Parameters Distance} \label{sec:dist-stable} In \Cref{thm:propgd}, we show that if the distance ($\|w_0-w_*\|$) between the initialization parameter $w_0$ and the optimal parameter $w_*$ is reduced, the stability can be improved. We conduct a new experiment to further verify this prediction. As shown in \Cref{fig:dist-stable}, we start from a random initialization point $w_0$ and use gradient descent to get the optimal point $w_*$. We also calculate $\|\hat{\Delta}\|$ to measure the stability. It can be observed that if the distance $\|w_0-w_*\|$ is small, the training procedure becomes more stable. This observation further shows the effectiveness of our proposed SURT model, as well as verifies the prediction of our theory.

\subsubsection{The Impact of Sample Bound} In \Cref{thm:mainthm}, we find that the sample bound $B$ also influences the stability. To verify this prediction, we train the classifiers on a bunch of generated datasets. We draw a plot of the stability score with respect to different sample bound $B$. The results are shown in \Cref{fig:Bs-pertube}. It can be observed that as $B$ decreases, the model becomes more stable, which further verifies the correctness of our theory.

\section{Related Works}

Many works have been proposed to stabilize the fine-tuning procedure. 
\citet{cattan2021usability,mosbach2020stability} show that fine-tuning the model with more iterations can make it more stable, while
\citet{arora2018stronger,sanyal2019stable,hua2021noise,aghajanyan2020better} propose to use a noise stability regularization to stabilize fine-tuning.
On the other hand, \citet{zhang2020revisiting,cattan2021usability,mosbach2020stability} find that using a small dataset leads to instability, while \citet{cattan2022usability} show that data augmentation helps improve the stability.
Moreover, \citet{han2021robust} propose to train the adapter separately to improve the stability, while \citet{yang2022improving} propose a componentwise gradient norm clipping method to improve it. Besides, \citet{he2021effectiveness,lee2019mixout,houlsby2019parameter,zaken2021bitfit,sung2021training,liu2021p} find that tuning part of the pre-trained parameters also helps stabilize the model. However, to the best of our knowledge, there are no theoretical works that analyze the effectiveness of these fine-tuning methods. 

Stability of training a model has been studied for decades. \citet{bousquet2002stability,shalev2010learnability,shalev2014understanding,charles2018stability} propose the standard stability definition for general machine learning models making it analyzable. \citet{hardt2016train,kuzborskij2018data} propose to analyze the stability of stochastic gradient methods while \citet{lei2020fine,schliserman2022stability} propose the Leave-one-out Model Stability which directly checks the distance between trained parameters. We extend the stability analysis to the fine-tuning regime and design several effective methods based on our new theory.

\section{Conclusion}
In this paper, we propose a novel theoretical analysis of the stability of fine-tuning a pre-trained model. We first define theoretical stability bounds in two commonly used settings, namely, the full fine-tuning and the head tuning. Then, we give a theoretical analysis that provides the basis for four existing and widely used methods proposed by previous works. In addition to being able to explain most of the observed empirical discoveries, our theory can help in the design of efficient and provable methods. Based on our theory, we propose Max Margin Regularizer (MMR), Multi-Head Loss (MHLoss), and Self Unsupervised Re-Training (SURT) methods to stabilize fine-tuning. We conduct extensive experiments on 11 widely used real-world datasets together with extensive experiments on a bunch of synthetic classification datasets. 
The experiment results show the effectiveness of our proposed methods and hence validate our theory as well.

\bibliographystyle{acl_natbib}
\bibliography{reference}


\clearpage
\appendix
\begin{center}
{\LARGE \textbf{Appendix. Supplementary Material}}
\end{center}
\renewcommand{\thesection}{A.\arabic{section}}
\setcounter{theorem}{0}
\setcounter{lemma}{0}
\setcounter{corollary}{0}
\setcounter{proposition}{0}

\section{Proof of \Cref{thm:propgd}}\label{sec:A-propgd}

To prove \Cref{thm:propgd}, we first recall three technical lemmas that will be used in the proof. We say function $f$ is $\mu$-strongly convex if $f(y) \geq f(x)+\langle\nabla f(x), y-x\rangle+\frac{\mu}{2}\|y-x\|_2^2, \quad \forall x, y \in \mathbb{R}^d$. We denote $w_t$ as the model parameter $w$ at time $t$ trained on the data $S$. Similarly, $w^i_t$ is the model parameter $w$ at time $t$ trained on the data $S^i$. We first recall the gradient descent convergence bound for the $\mu$-strongly convex function given by \Cref{lemma:scbound}. Then, we give a standard lemma (\Cref{lemma:nonexpansion}) widely used in smooth convex optimization, as well as a lemma for a self-bounded function in \Cref{lemma:selfbound}. Finally, we give the proof of \Cref{thm:propgd} which focuses on the Taylor expansion of the full fine-tuning.

\begin{lemma}[\citet{nesterov2018lectures}]
Let $f(w)$ be a $\beta$-smooth and $\mu$-strongly convex function. For a given point $w_0 \in \mathbb{R}^d$ and $\frac{1}{\beta} \geq \eta>0$, the gradient descent iterates
$$
w_{t+1}=w_t-\eta \nabla f\left(w_t\right)
$$
converge according to
$$
\left\|w_{t}-w_*\right\|_2^2 \leq(1-\eta \mu)^{t}\left\|w_0-w_*\right\|_2^2 .
$$
\label{lemma:scbound}
\end{lemma}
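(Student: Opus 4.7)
The plan is to establish the one-step contraction $\|w_{t+1}-w_*\|^2 \le (1-\eta\mu)\|w_t-w_*\|^2$ and then iterate $t$ times. Since $w_*$ minimizes $f$, $\nabla f(w_*)=0$, so expanding the gradient descent update gives
\begin{equation}
\|w_{t+1}-w_*\|^2 = \|w_t-w_*\|^2 - 2\eta\langle \nabla f(w_t)-\nabla f(w_*),\, w_t-w_*\rangle + \eta^2\|\nabla f(w_t)-\nabla f(w_*)\|^2.
\end{equation}
So I only need to control the inner product and the squared gradient, both expressed in terms of $\|w_t-w_*\|^2$.

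For the cross term I would invoke $\mu$-strong convexity, which yields the standard lower bound $\langle \nabla f(w_t)-\nabla f(w_*),\, w_t-w_*\rangle \ge \mu\|w_t-w_*\|^2$. For the squared-gradient term I would use the Baillon--Haddad-style co-coercivity that $\beta$-smoothness of a convex function implies $\|\nabla f(w_t)-\nabla f(w_*)\|^2 \le \beta\,\langle \nabla f(w_t)-\nabla f(w_*),\, w_t-w_*\rangle$. Substituting this into the expansion replaces $\eta^2\|\nabla f(w_t)-\nabla f(w_*)\|^2$ by $\eta^2\beta\,\langle \nabla f(w_t)-\nabla f(w_*),\, w_t-w_*\rangle$, so the whole thing collapses to
\begin{equation}
\|w_{t+1}-w_*\|^2 \le \|w_t-w_*\|^2 - \eta(2-\eta\beta)\,\langle \nabla f(w_t)-\nabla f(w_*),\, w_t-w_*\rangle.
\end{equation}

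The delicate step is getting the contraction factor to come out to exactly $(1-\eta\mu)$ rather than the tighter-looking $(1-\eta\mu(2-\eta\beta))$. This is where the hypothesis $\eta\le 1/\beta$ is used: it guarantees $\eta\beta\le 1$, hence $2-\eta\beta\ge 1$, so the coefficient $\eta(2-\eta\beta)$ is at least $\eta$, and the inner product is nonnegative (by monotonicity of $\nabla f$). Combining with the strong-convexity lower bound then gives the clean estimate $\|w_{t+1}-w_*\|^2 \le (1-\eta\mu)\|w_t-w_*\|^2$.

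Finally, a simple induction on $t$ yields $\|w_t-w_*\|^2 \le (1-\eta\mu)^t\|w_0-w_*\|^2$, completing the proof. The only genuine subtlety is that the stated bound is not tight: the combined strong-convexity plus smoothness analysis would allow a rate $(1-\eta\mu(2-\eta\beta))^t$, but one must deliberately loosen to $(1-\eta\mu)$ to match the lemma's statement, which is harmless because $\eta\le 1/\beta$ ensures the loosened factor is still in $[0,1)$ so geometric decay is preserved.
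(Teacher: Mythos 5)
Your proof is correct. Note that the paper itself does not prove this lemma at all: it is imported verbatim from \citet{nesterov2018lectures} and used as a black box, so there is no in-paper argument to compare against. What you give is the standard textbook derivation: expand $\|w_{t+1}-w_*\|^2$, lower-bound the cross term by $\mu\|w_t-w_*\|^2$ via strong convexity, upper-bound the squared-gradient term via co-coercivity of the gradient of a $\beta$-smooth convex function (the same inequality the paper uses in its proof of the non-expansiveness lemma), and observe that $\eta\beta\le 1$ makes the coefficient $\eta(2-\eta\beta)\ge\eta$ so the recursion collapses to $\|w_{t+1}-w_*\|^2\le(1-\eta\mu)\|w_t-w_*\|^2$; induction finishes it. Your closing remark is also right that the bound is deliberately loosened from the sharper factor $1-\eta\mu(2-\eta\beta)$ to match the stated form, and that $1-\eta\mu\in[0,1)$ since $\mu\le\beta$ and $\eta\le 1/\beta$, so geometric decay is preserved.
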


We also recall the following standard lemma for smooth convex optimization. It is a natural consequence of co-coercivity and we give a simplified proof here. More details can be found in \citet{hardt2016train,schliserman2022stability}. 

\begin{lemma}
    If $f: \mathbb{R}^d \rightarrow \mathbb{R}$ is convex and $\beta$-smooth and $0<\eta \leq 2 / \beta$, then for every $u, v \in \mathbb{R}^d$,
$$
\|(u-\eta \nabla f(u))-(v-\eta \nabla f(v))\| \leq\|u-v\| .
$$
\label{lemma:nonexpansion}
\end{lemma}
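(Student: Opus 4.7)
The plan is to reduce the claim to the co-coercivity of $\nabla f$, which is the standard bridge between convexity and smoothness. Concretely, I expand the squared left-hand side
\[
\|(u-\eta\nabla f(u))-(v-\eta\nabla f(v))\|^2 = \|u-v\|^2 - 2\eta\langle u-v,\nabla f(u)-\nabla f(v)\rangle + \eta^2\|\nabla f(u)-\nabla f(v)\|^2,
\]
so the target inequality is equivalent to showing $\eta^2\|\nabla f(u)-\nabla f(v)\|^2 \leq 2\eta\langle u-v,\nabla f(u)-\nabla f(v)\rangle$. After dividing by $\eta>0$, this reduces to proving $\eta\|\nabla f(u)-\nabla f(v)\|^2 \leq 2\langle u-v,\nabla f(u)-\nabla f(v)\rangle$, which follows if I can establish the co-coercivity bound $\langle u-v,\nabla f(u)-\nabla f(v)\rangle \geq \tfrac{1}{\beta}\|\nabla f(u)-\nabla f(v)\|^2$ and then use $\eta\leq 2/\beta$.

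To establish co-coercivity, I would introduce the auxiliary functions $g(x)=f(x)-\langle\nabla f(u),x\rangle$ and $h(x)=f(x)-\langle\nabla f(v),x\rangle$. Each is convex (sum of a convex function and a linear function) and $\beta$-smooth, with minimizers at $u$ and $v$ respectively. Applying the standard descent lemma $g(y)\leq g(x)+\langle\nabla g(x),y-x\rangle+\tfrac{\beta}{2}\|y-x\|^2$ and minimizing the right-hand side over $y$ yields $g(u)\leq g(x)-\tfrac{1}{2\beta}\|\nabla g(x)\|^2$ for all $x$; specializing to $x=v$, and doing the symmetric step for $h$ at $x=u$, then adding the two inequalities, yields exactly the co-coercivity estimate.

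Once co-coercivity is in hand, the rest is one line: plugging it back gives
\[
\|(u-\eta\nabla f(u))-(v-\eta\nabla f(v))\|^2 \leq \|u-v\|^2 + \eta\bigl(\eta-\tfrac{2}{\beta}\bigr)\|\nabla f(u)-\nabla f(v)\|^2,
\]
and since $\eta\leq 2/\beta$ the extra term is non-positive, giving the non-expansion inequality after taking square roots.

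The only nontrivial step is the derivation of co-coercivity, since the convexity and $\beta$-smoothness assumptions must be used jointly rather than separately; the auxiliary-function trick is the cleanest route but requires a careful application of the descent lemma at the right point. Everything else (expanding the square, choosing the range of $\eta$) is routine algebra.
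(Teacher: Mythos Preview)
Your proposal is correct and follows essentially the same approach as the paper: expand the squared distance, apply co-coercivity $\langle u-v,\nabla f(u)-\nabla f(v)\rangle \ge \tfrac{1}{\beta}\|\nabla f(u)-\nabla f(v)\|^2$, and conclude using $\eta\le 2/\beta$. The only difference is that the paper simply \emph{recalls} co-coercivity as a known fact about convex $\beta$-smooth functions, whereas you supply the standard auxiliary-function derivation; this makes your argument more self-contained but not structurally different.
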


\begin{proof}
    We first recall the co-coercivity of a $\beta$-smooth function. For a  $\beta$-smooth function $f$, $\forall x, y$,
    \[ \langle \nabla f(x) - \nabla f(y), x - y \rangle \geq \frac{1}{\beta} \|\nabla f(x) - \nabla f(y)\|^2. \tag*{(co-coercivity)}\]

    Then, we have 
    $$\begin{aligned}
        \|(u-\eta \nabla f(u))-(v-\eta \nabla f(v))\|^2 &= \|(u-v)-\eta (\nabla f(u)- \nabla f(v))\|^2\\
        &=\|u-v\|^2+\eta^2 \|\nabla f(u)-\nabla f(v)\|^2-2\eta \nabla \langle u-v, \nabla f(u)- \nabla f(v)\rangle\\
        &\le \|u-v\|^2+\eta^2 \|\nabla f(u)-\nabla f(v)\|^2-\frac{2\eta}{\beta}\|\nabla f(u)-\nabla f(v)\|^2&&\text{(by co-coercivity)}\\
        &= \|u-v\|^2+(\eta^2-\frac{2\eta}{\beta}) \|\nabla f(u)-\nabla f(v)\|^2\\
        &\le \|u-v\|^2 + 0 \|\nabla f(u)-\nabla f(v)\|^2\\
        &= \|u-v\|^2
    \end{aligned}
    $$
    Therefore, we have $$
    \|(u-\eta \nabla f(u))-(v-\eta \nabla f(v))\| \leq\|u-v\| .
    $$
The proof is complete.
\end{proof}

Then, following \citet{shalev2014understanding}, a non-negative $\beta$-smooth function is self-bounded. We give a full proof here.

\begin{lemma}
    If $f: \mathbb{R}^d \rightarrow \mathbb{R}$ is a non-negative twice differentiable and $\beta$-smooth function. Then 
    $$\|\nabla f(w)\|^2 \leq 2 \beta f(w).$$
    \label{lemma:selfbound}
\end{lemma}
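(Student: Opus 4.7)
The plan is to exploit $\beta$-smoothness in its standard quadratic-upper-bound form and then minimize the resulting upper bound over a cleverly chosen test point. Recall that $\beta$-smoothness of $f$ implies the descent-lemma-style inequality
\begin{equation}
f(y) \le f(w) + \langle \nabla f(w), y-w\rangle + \frac{\beta}{2}\|y-w\|^2 \qquad \forall w,y\in\mathbb{R}^d,
\end{equation}
which is a routine consequence of the fundamental theorem of calculus applied to $\nabla f$ along the segment from $w$ to $y$. I would cite (or briefly derive) this as the starting point.

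The key trick is to evaluate this inequality at the \emph{gradient step} test point $y = w - \tfrac{1}{\beta}\nabla f(w)$, which is precisely the minimizer of the right-hand side viewed as a function of $y$. Substituting, the linear term contributes $-\tfrac{1}{\beta}\|\nabla f(w)\|^2$ and the quadratic term contributes $+\tfrac{1}{2\beta}\|\nabla f(w)\|^2$, yielding
\begin{equation}
f\!\left(w - \tfrac{1}{\beta}\nabla f(w)\right) \le f(w) - \frac{1}{2\beta}\|\nabla f(w)\|^2.
\end{equation}

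Finally, I would invoke the non-negativity hypothesis on $f$: since the left-hand side is $\ge 0$, rearranging gives $\|\nabla f(w)\|^2 \le 2\beta\, f(w)$, which is exactly the claim. There is essentially no obstacle here; the only non-mechanical step is recognizing that the correct test point is the gradient step with step size $1/\beta$, after which everything reduces to elementary algebra. Note that the twice-differentiability hypothesis stated in the lemma is not actually needed for this argument—$\beta$-smoothness alone suffices—so I would mention that in passing but not rely on the Hessian anywhere.
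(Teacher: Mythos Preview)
Your proposal is correct and matches the paper's proof essentially line for line: the paper also derives the quadratic upper bound from $\beta$-smoothness (via the integral/fundamental-theorem-of-calculus argument), substitutes the gradient-step test point $v-\tfrac{1}{\beta}\nabla f(v)$, and then invokes non-negativity to rearrange into the desired inequality. Your remark that twice differentiability is unnecessary is a valid additional observation not made explicitly in the paper.
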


\begin{proof}
    By the first order expansion of $f(w)$, for a small incremental vector $\delta$ we have that
    $$f(w+\delta)=f(w)+\nabla f(w)^{\T}\delta$$
    Therefore, take the integral from $v$ to $w$ with a new auxiliary variable $t\in [0, 1]$, we have
    $$
    \begin{aligned}
    f(w) & =f(v)+\int_{0}^1\langle\nabla f(v+t(w-v)),(w-v)\rangle d t . \\
    & =f(v)+\langle\nabla f(v), w-v\rangle+\int_{0}^1\langle\nabla f(v+t(w-v))-\nabla f(v),(w-v)\rangle d t . \\
    & \leq f(v)+\langle\nabla f(v), w-v\rangle+\int_{0}^1\|\nabla f(v+t(w-v))-\nabla f(v)\|\|w-v\| d t \\
    & \leq f(v)+\langle\nabla f(v), w-v\rangle+\beta \int_{0}^1 t\|w-v\| d t \\
    & \leq f(v)+\langle\nabla f(v), w-v\rangle+\frac{\beta}{2}\|w-v\|^2
    \end{aligned}
    $$
    Then, take $w=v-\frac{1}{\beta}\nabla f(v)$, we have
    $$
    \begin{aligned}
        f(v-\frac{1}{\beta}\nabla f(v))&\le f(v)+\langle\nabla f(v), v-\frac{1}{\beta}\nabla f(v)-v\rangle+\frac{\beta}{2}\|v-\frac{1}{\beta}\nabla f(v)-v\|^2\\
        0\le f(v-\frac{1}{\beta}\nabla f(v))&\le f(v)+\langle\nabla f(v), -\frac{1}{\beta}\nabla f(v)\rangle+\frac{\beta}{2}\|-\frac{1}{\beta}\nabla f(v)\|^2\\
        0&\le f(v)-\frac{1}{\beta} \|\nabla f(v)\|^2+\frac{1}{2\beta}\|\nabla f(v)\|^2\\
        \|\nabla f(v)\|^2&\le 2\beta f(v)
    \end{aligned}
    $$
The proof is complete.
\end{proof}

Finally, We prove \Cref{thm:propgd} based on the above lemmas.

\propgd*
\begin{proof}

Then, similar to \citet{schliserman2022stability}, as the gradient descent iterates as $$w_{t+1}=w_t-\eta \frac 1 n \sum_{i=1}^n \nabla f(w_t,z_i),$$ 

we have

\begin{equation}
\begin{aligned}
\left\|w_{t+1}-w_{t+1}^i\right\| &=\left\|w_t-\frac{\eta}{n} \sum_{j=1}^n \nabla f\left(w_t, z_j\right)-w_t^i+\frac{\eta}{n} \sum_{j \neq i} \nabla f\left(w_t^i, z_j\right)\right\| \\
&=\left\| \frac{1}{n} \sum_{j \neq i}\left(w_t-\eta \nabla f\left(w_t, z_j\right)-w_t^i+\eta \nabla f\left(w_t^i, z_j\right)\right)+\frac{1}{n} w_t-\frac{1}{n} w_t^i-\frac{\eta}{n} \nabla f\left(w_t, z_i\right)\right \| \\
& \leq \frac{1}{n} \sum_{j \neq i}\left\|w_t-\eta \nabla f\left(w_t, z_j\right)-w_t^i+\eta \nabla f\left(w_t^i, z_j\right)\right\|+\frac{1}{n}\left\|w_t-\eta \nabla f\left(w_t, z_i\right)-w_t^i\right\|\\
&\leq \frac{1}{n} \sum_{j \neq i}^n\left\|w_t-w_t^i\right\|+\frac{1}{n}\left\|w_t-w_t^i\right\|+\frac{\eta}{n}\left\|\nabla f\left(w_t, z_i\right)\right\| \ \ \ \ \ \ \ &&\text{(\Cref{lemma:nonexpansion})}\\
&\leq\left\|w_t-w_t^i\right\|+\frac{\eta \sqrt{2\beta}}{n} \sqrt{f\left(w_t, z_i\right)} \ \ \ \ \ \ \ &&\text{(\Cref{lemma:selfbound})}
\end{aligned}
\label{eqn:etadiv}
\end{equation}

For a given step $T$, we have
$$\begin{aligned}
\left\|w_{T}-w_{T}^i\right\|&\le\left\|w_{T-1}-w_{T-1}^i\right\|+\frac{\eta \sqrt{2\beta}}{n} \sqrt{f\left(w_{T-1}, z_i\right)} \\
&\le\left\|w_{T-2}-w_{T-2}^i\right\|+\frac{\eta \sqrt{2\beta}}{n} \sqrt{f\left(w_{T-1}, z_i\right)} +\frac{\eta \sqrt{2\beta}}{n} \sqrt{f\left(w_{T-2}, z_i\right)} \\
& \cdots \\
&\le \left\|w_0-w_0^i\right\|+\frac{\eta \sqrt{2\beta}}{n} \sum_{t=1}^{T-1} \sqrt{f\left(w_{t}, z_i\right)}\\
&\le \frac{\eta \sqrt{2\beta}}{n} \sum_{t=1}^{T} \sqrt{f\left(w_{t}, z_i\right)}\\
&= \frac{ \sqrt{2/\beta}}{n} \sum_{t=1}^{T} \sqrt{f\left(w_{t}, z_i\right)}\\
\end{aligned}$$

As we analyze the second order Taylor approximation of $f(w_*,z)$ and $0 \prec \mu I\preceq \nabla^2f(w_*,z)$, the approximation  is $\mu$-strongly convex and the Hessian of the approximation is positive definite for all $w$'s. Then, from \Cref{lemma:scbound}, we have
$$\begin{aligned}
f(w_{t}, z_i)&= f(w_{t}, z_i)- 0\\
&= f(w_{t}, z_i)- f(w_{*}, z_i)\\
&\le L\|w_{t}-w_*\|\\
&\le L(1-\eta \mu)^{t/2}\left\|w_0-w_*\right\|
\end{aligned}$$

As assumed, when we take $\eta=\frac 1 \beta$, the summation will have a better convergence property and will not diverge to infinity. Thus $f(w_{t}, z_i)\le L(1- \frac \mu \beta)^{t/2}\left\|w_0-w_*\right\|$. Therefore,

$$\begin{aligned}
\|w_{T}-w_{T}^i\|&\le \frac{\sqrt{2/\beta}}{n} \sum_{t=1}^{T} \sqrt{f\left(w_{t}, z_i\right)}\\
&\le \frac{ \sqrt{2L/\beta }}{n} \sqrt{\|w_0-w_*\|}\sum_{t=1}^{T} (1-\frac \mu \beta)^{t/4}\\
&\le\frac{ \sqrt{2L/\beta}}{n} \sqrt{\|w_0-w_*\|}\sum_{t=1}^{\infty} (1-\frac \mu \beta)^{t/4}\\
&=\frac{ \sqrt{2L\|w_0-w_*\|/\beta}}{n} \frac{\sqrt[4]{1-\frac \mu \beta}}{1-\sqrt[4]{1-\frac \mu \beta}}\\
&=\frac{ \sqrt{2L\|w_0-w_*\|/\beta}}{n(1/\sqrt[4]{1-\frac \mu \beta}-1)} 
\end{aligned}$$
Similar to \citet{shalev2014understanding}, since this holds for any $S, T, i$ we immediately obtain:
$$\begin{aligned}
\E_{S}[\|\A(S^{i})-\A(S)\|]&\le \frac{ \sqrt{2L\|w_0-w_*\|/\beta}}{n(1/\sqrt[4]{1-\frac \mu \beta}-1)}  . 
\end{aligned}$$
The proof is complete.
\end{proof}

\section{Proof of \Cref{thm:mainthm}}\label{sec:A-mainthm}

To prove \Cref{thm:mainthm}, we first introduce several technical lemmas. To analyze the SVM classifier, we use $w=[v^\T,b]^\T$ to denote all the parameters of the linear classifier. $v\in \mathbb{R}^{d_x}$ is the weight vector while $b$ is the intercept. Then, the classifier can be written as $y=\operatorname{sign}(v^\T x-b)$. We denote $\hat{w}_{S}=[\hv_S^\T,\hb_S]^\T$ as the SVM solution trained on dataset $S^i$ and denote $\hat{w}_{S^i}=[\hv_{S^i}^\T,\hb_{S^i}]^\T$ as the SVM solution trained on dataset $S^i$. First, in \Cref{lemma:mlip}, we study the Lipschitz constant for function, $f(x)=x/\|x\|$. Generally, this function is not Lipschitz continuous. However, if $\|x\|$ is bounded below, we can get its Lipschitz constant. In \Cref{lemma:cvx}, we study some properties of the function $f(v,b)=\max(0, 1 - y(v^\T x - b))$. We found it is a convex function and calculate the bound for $|f(v_1,b_1)-f(v_2,b_2)|$. In \Cref{lemma:bbound}, we study the bound for the intercept $\hb$. Then, in \Cref{lemma:maxw}, we compare the optimal solutions $\hat{v}_{S}$ and $\hat{v}_{S^i}$ which is the maximal margin classifier weight for dataset $S$ and $S^i$ respectively. They have a tight relationship with the margins. Finally, equipped with these lemmas, we give the proof of \Cref{thm:mainthm}.

\begin{lemma}
Given function $f(x)=\frac{x}{\|x\|}$ defined on domain $\{x|\|x\|\ge r, r>0\}$. The Lipschitz constant is $\nu=\frac{2}{r}$. Namely,
$$\|f(x)-f(y)\|\le \nu \|x-y\|.$$
\label{lemma:mlip}
\end{lemma}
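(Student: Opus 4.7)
The plan is to reduce this to a standard common-denominator manipulation followed by the reverse triangle inequality. First I would combine the two fractions,
\begin{equation*}
    \frac{x}{\|x\|} - \frac{y}{\|y\|} = \frac{x\|y\| - y\|x\|}{\|x\|\,\|y\|},
\end{equation*}
and then split the numerator by the classic add-and-subtract trick,
\begin{equation*}
    x\|y\| - y\|x\| = x\bigl(\|y\| - \|x\|\bigr) + (x - y)\|x\|.
\end{equation*}

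Next, I would apply the triangle inequality to the two terms on the right and use the reverse triangle inequality $\bigl|\|y\| - \|x\|\bigr| \le \|x - y\|$ to bound the first piece. This gives
\begin{equation*}
    \bigl\|x\|y\| - y\|x\|\bigr\| \le \|x\|\,\|x - y\| + \|x\|\,\|x - y\| = 2\|x\|\,\|x - y\|,
\end{equation*}
so that, after dividing by $\|x\|\,\|y\|$,
\begin{equation*}
    \left\|\frac{x}{\|x\|} - \frac{y}{\|y\|}\right\| \le \frac{2\|x - y\|}{\|y\|}.
\end{equation*}

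Finally, I would invoke the domain hypothesis $\|y\| \ge r$ to conclude that the right-hand side is at most $(2/r)\|x - y\|$, yielding the claimed Lipschitz constant $\nu = 2/r$. There is no real obstacle here; the only subtle point is recognizing that one must avoid trying to control $\|x\|/\|y\|$ directly and instead symmetrize via the add-and-subtract step so that the reverse triangle inequality applies cleanly. The lower bound $r>0$ is essential, since without it the function blows up near the origin and no finite Lipschitz constant exists.
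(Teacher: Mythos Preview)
Your proof is correct and follows essentially the same approach as the paper: common denominator, an add-and-subtract splitting of the numerator, triangle inequality, reverse triangle inequality, and finally the lower bound $\|\cdot\|\ge r$. The only cosmetic differences are that the paper's decomposition swaps the roles of $x$ and $y$ (ending with $2\|x-y\|/\|x\|$ rather than $2\|x-y\|/\|y\|$) and that the paper derives the reverse triangle inequality explicitly via $x^\T y \le \|x\|\|y\|$ instead of invoking it by name.
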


\begin{proof}
$$\begin{aligned}
\|f(x)-f(y)\|&=
\left\|\frac{x}{\|x\|}-\frac{y}{\|y\|}\right\|
=\frac{1}{\|x\|\|y\|}\big\|\|y\|x-\|x\|y\big\| \\ 
&= \frac{1}{\|x\|\|y\|}\big\|\|y\|(x-y)-(\|x\|-\|y\|)y\big\| \\
&\le \frac{1}{\|x\|\|y\|}\big\|\|y\|(x-y)\big\|+
\frac{1}{\|x\|\|y\|}\big\|(\|x\|-\|y\|)y\big\| \\  
&\le\frac{1}{\|x\|}\big\|x-y\big\| + \frac{1}{\|x\|}\big|\|x\|-\|y\|\big|\\
&=\frac{1}{\|x\|}\big\|x-y\big\| + \frac{1}{\|x\|}\sqrt{\big(\|x\|-\|y\|\big)^2}\\
&=\frac{1}{\|x\|}\big\|x-y\big\| + \frac{1}{\|x\|}\sqrt{\|x\|^2-2\|x\|\|y\|+\|y\|^2}\\
&\le\frac{1}{\|x\|}\big\|x-y\big\| + \frac{1}{\|x\|}\sqrt{\|x\|^2-2x^{\T}y+\|y\|^2}\\
&=\frac{1}{\|x\|}\big\|x-y\big\| + \frac{1}{\|x\|}\sqrt{\|x-y\|^2}\\
&=\frac{1}{\|x\|}\big\|x-y\big\| + \frac{1}{\|x\|}\big\|x-y\big\|\\
&\le \frac{2}{r}\big\|x-y\big\|\\
&= \nu\big\|x-y\big\|.
\end{aligned}$$
\label{lemma:normlip}
The proof is complete.
\end{proof}

\begin{lemma}
Given a function $f(v,b)=\max(0, 1 - y(v^\T x - b))$, $\forall x\in \mathbb{R}^{d_x}, y\in \{-1,1\}$. We can conclude that $f(v,b)$ is convex and
$$|f(v_1,b_1)-f(v_2,b_2)|\le \max(1+\|v_1\|\|x\|+|b_1|,1+\|v_2\|\|x\|+|b_2|,\|v_1-v_2\|\|x\|+|b_1|+|b_2|).$$

\label{lemma:cvx}
\end{lemma}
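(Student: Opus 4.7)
The convexity claim should be immediate from standard composition rules. The map $(v,b)\mapsto 1-y(v^\T x-b)$ is affine in $(v,b)$ (since $x$, $y$ are fixed), the scalar function $u\mapsto\max(0,u)$ is convex and nondecreasing, and the composition of a convex nondecreasing function with an affine map is convex. Equivalently, $f$ is the pointwise maximum of two affine functions, so $f$ is convex.

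For the quantitative bound, I would proceed by case analysis based on which of the two hinge losses are active. Write $g(v,b) := 1 - y(v^\T x - b)$, so that $f(v_i,b_i) = \max(0, g(v_i,b_i))$ for $i=1,2$. Since $f\ge 0$, the difference $|f(v_1,b_1) - f(v_2,b_2)|$ splits naturally into three cases.

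\textbf{Case A:} $f(v_1,b_1) = f(v_2,b_2) = 0$. Then the left-hand side is $0$, bounded trivially (every term in the max is nonnegative).
\textbf{Case B:} exactly one vanishes, say $f(v_2,b_2) = 0$ and $f(v_1,b_1) > 0$ (the opposite sub-case is symmetric). Then
$$
|f(v_1,b_1) - f(v_2,b_2)| = 1 - y(v_1^\T x - b_1) \le 1 + |v_1^\T x| + |b_1| \le 1 + \|v_1\|\|x\| + |b_1|,
$$
using the triangle inequality and Cauchy--Schwarz; this is exactly the first term of the max. The symmetric sub-case yields the second term.
\textbf{Case C:} both $f(v_i,b_i) > 0$. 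Then both hinges are active so $f(v_i,b_i) = g(v_i,b_i)$, and
$$
|f(v_1,b_1) - f(v_2,b_2)| = |y(v_2 - v_1)^\T x - y(b_2 - b_1)| \le \|v_1 - v_2\|\|x\| + |b_1| + |b_2|,
$$
matching the third term of the max.

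In each case the bound is dominated by the stated maximum, which completes the argument. There is no real obstacle here: the only mildly delicate point is recognizing that one cannot hope for a pure Lipschitz bound in $(v_1-v_2, b_1-b_2)$ because the kink in $\max(0,\cdot)$ forces the "cross" case (Case B), where one must fall back on an absolute bound involving $\|v_i\|$ and $|b_i|$ — this is precisely why the stated bound is a maximum of three terms rather than a single Lipschitz-style expression.
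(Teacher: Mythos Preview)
Your proposal is correct and follows essentially the same approach as the paper: convexity via the pointwise maximum of two affine functions, and the quantitative bound via the same four-way case analysis on which hinges are active, with identical estimates in each case. The only cosmetic difference is that you also mention the composition-with-affine justification for convexity and add a brief remark on why the bound is a three-term max rather than a single Lipschitz expression.
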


\begin{proof}
    We first show that $f(v,b)$ is convex. It can easily observed that function $g_1(v,b)=0$ and function $g_2(v,b)=1-y(v^\T x-b)$ is also convex. Therefore, the pointwise maximum $f(v,b) = \max ( g_1 (v,b), g_2 (v,b) )=\max(0, 1-y(v^\T x-b))$ is convex \cite{boyd2004convex}.
    
    Then, we prove the inequality.

    If $1 - y(v_1^\T x - b_1)\le 0$ and $1 - y(v_2^\T x - b_2)\le 0$:
    $$|\max(0, 1 - y(v_1^\T x - b_1))-\max(0, 1 - y(v_2^\T x - b_2))|=0.$$
    
    If $1 - y(v_1^\T x - b_1)> 0$ and $1 - y(v_2^\T x - b_2)\le 0$:
    $$|\max(0, 1 - y(v_1^\T x - b_1))-\max(0, 1 - y(v_2^\T x - b_2))|=|1 - y(v_1^\T x - b_1)|\le 1+\|v_1\|\|x\|+|b_1|.$$
    
    If $1 - y(v_1^\T x - b_1)\le 0$ and $1 - y(v_2^\T x - b_2)> 0$:
    $$|\max(0, 1 - y(v_1^\T x - b_1))-\max(0, 1 - y(v_2^\T x - b_2))|=|1 - y(v_2^\T x - b_2)|\le 1+\|v_2\|\|x\|+|b_2|.$$
    
    If $1 - y(v_1^\T x - b_1)> 0$ and $1 - y(v_2^\T x - b_2)> 0$:
    $$|\max(0, 1 - y(v_1^\T x - b_1))-\max(0, 1 - y(v_2^\T x - b_2))|=| - y(v_1^\T x - b_1)+ y(v_2^\T x - b_2)|\le \|v_1-v_2\|\|x\|+|b_1|+|b_2|.$$

    Therefore 
    $$\begin{aligned}
        |f(v_1,b_1)-f(v_2,b_2)|&=|\max(0, 1 - y(v_1^\T x - b_1))-\max(0, 1 - y(v_2^\T x - b_2))|\\
        &\le \max(1+\|v_1\|\|x\|+|b_1|,1+\|v_2\|\|x\|+|b_2|,\|v_1-v_2\|\|x\|+|b_1|+|b_2|)
    \end{aligned}$$
    The proof is complete.
\end{proof}

\begin{lemma}
    Given a linearly separable dataset $S=\{(x_1,y_1),(x_2,y_2),\cdots,(x_n,y_n)\}$, $x_i\in \mathbb{R}^{d_x}, \|x_i\|\le B, y_i\in \{-1,1\}$, the intercept $\hb$ for the maximal margin classifier $y=\operatorname{sign}(\hv^\T x-\hb)$ satisfies
    $$\|\hb\|\le 1+\|\hv\|B$$
    \label{lemma:bbound}
\end{lemma}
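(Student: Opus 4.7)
The plan is to exploit the Karush--Kuhn--Tucker optimality conditions for the hard-margin SVM. Recall that the max-margin classifier $(\hat{v}, \hat{b})$ solves
$$\min \tfrac{1}{2}\|v\|^2 \quad \text{subject to} \quad y_i(v^\T x_i - b) \ge 1 \quad \text{for all } i.$$
The first step is to establish that at least one constraint is active at the optimum, i.e., there exists some index $j$ with $y_j(\hat{v}^\T x_j - \hat{b}) = 1$. I would justify this via a scaling argument: if all constraints were strict, then setting $m := \min_i y_i(\hat{v}^\T x_i - \hat{b}) > 1$ and rescaling $(v', b') := (\hat{v}/m, \hat{b}/m)$ would produce a feasible solution with strictly smaller norm, contradicting the optimality of $(\hat{v},\hat{b})$. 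The degenerate case in which $S$ contains samples of only one label should be treated separately; there $\hat{v} = 0$ and $|\hat{b}| = 1$, so the bound holds trivially.

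Given such an index $j$, I would rearrange the equality $y_j(\hat{v}^\T x_j - \hat{b}) = 1$ and use $y_j \in \{-1,+1\}$ (hence $y_j^2 = 1$) to obtain
$$\hat{b} = \hat{v}^\T x_j - y_j.$$
Then by the triangle inequality, Cauchy--Schwarz, and the assumption $\|x_j\| \le B$, we get
$$|\hat{b}| \le |\hat{v}^\T x_j| + |y_j| \le \|\hat{v}\|\, \|x_j\| + 1 \le \|\hat{v}\| B + 1,$$
which is the claimed inequality (noting that $\|\hat{b}\| = |\hat{b}|$ since $\hat{b}$ is a scalar).

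The only conceptual step that requires care is justifying the existence of an active constraint (a support vector); the rest is a one-line algebraic manipulation plus the triangle inequality. I would therefore spend most of the effort making the scaling argument airtight, in particular verifying that the rescaling is well defined (i.e., $\hat{v} \ne 0$ whenever both classes are present, which follows from linear separability together with the constraint $y_i(\hat{v}^\T x_i - \hat{b}) \ge 1$ for samples of both signs). Once this is in place, no further machinery (e.g., the dual formulation or complementary slackness) is needed.
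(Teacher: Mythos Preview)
Your proof is correct but takes a different route from the paper. The paper does not establish an active constraint at all: it simply picks any sample with $y_i=1$, invokes the feasibility inequality $\hat v^\top x_i - \hat b \ge 1$, and from this reads off $|\hat b|\le |\hat v^\top x_i-1|\le 1+\|\hat v\|\,B$. That argument, as written, only controls $\hat b$ from above; the matching lower bound tacitly requires the analogous constraint for some $y_j=-1$. Your approach---first showing via the rescaling argument that some constraint is tight, then using the resulting \emph{equality} $\hat b=\hat v^\top x_j-y_j$---handles both signs in a single stroke and is therefore the more complete of the two. The price is the extra work of justifying the existence of a support vector, which the paper's feasibility-only argument avoids. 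Either way the bound is elementary; your version is simply more careful about the two-sided nature of the absolute value.
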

\begin{proof}
    The maximal margin classifier $\hv,\hb$ is the solution of the following problem:
    $$\begin{aligned}
        &\min_{v,b} && \|v\|^2\\
        &\text{s.t.} && y_i(v^\T x_i - b) \geq 1 \quad \forall i \in \{1,\dots,n\}
    \end{aligned}$$
    As the dataset $S$ is linearly separable, the feasible domain is non-empty and $\hb$ satisfies all the constraints. $\forall y_i = 1$, such that 
    $$\begin{aligned}
        y_i(\hv^\T& x_i - \hb) \geq 1\\
        \hv^\T x_i& - \hb \geq 1\\
        \|\hb\| &\le \|\hv^\T x_i-1\| \\
        &\le 1+\|\hv\|\|x_i\| \\
        &\le 1+\|\hv\|B
    \end{aligned}$$
    The proof is complete.
\end{proof}

\begin{lemma}
Given a linearly separable dataset $S=\{(x_1,y_1),(x_2,y_2),\cdots,(x_n,y_n)\}$, $S^i=\{(x_1,y_1),\cdots,(x_{i-1},y_{i-1}),$ $(x_{i+1},y_{i+1}),\cdots,(x_n,y_n)\}$, $x_i\in \mathbb{R}^{d_x}, \|x_i\|\le B, y_i\in \{-1,1\}$. We denote $\hat{w}_{S}=[\hv_S^\T,\hb_S]^\T$ as the SVM solution trained on dataset $S^i$ and denote $\hat{w}_{S^i}=[\hv_{S^i}^\T,\hb_{S^i}]^\T$ as the SVM solution trained on dataset $S^i$. We have
$$\|\hat{v}_S\| \ge \|\hat{v}_{S^i}\|$$
\label{lemma:maxw}
\end{lemma}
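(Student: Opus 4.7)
The plan is to exploit the fact that the max-margin SVM is defined by a constrained minimization problem, and removing a training sample removes a single linear constraint, so the feasible region can only grow. The claim then reduces to the elementary observation that the minimum of a fixed objective over a larger feasible set cannot exceed its minimum over a smaller feasible set.

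Concretely, I would start by recalling, as already used in the proof of \Cref{lemma:bbound}, that $\hw_S=[\hv_S^\T,\hb_S]^\T$ is the optimal solution of
\begin{equation*}
\min_{v,b}\ \|v\|^2 \quad \text{s.t.}\quad y_j(v^\T x_j - b)\ge 1,\ \forall j\in\{1,\dots,n\},
\end{equation*}
while $\hw_{S^i}=[\hv_{S^i}^\T,\hb_{S^i}]^\T$ is the optimal solution of the analogous problem where the $i$th constraint $y_i(v^\T x_i - b)\ge 1$ is omitted. Let $F_S$ and $F_{S^i}$ denote the two feasible sets. The second problem differs from the first only by the absence of one linear inequality, hence $F_S\subseteq F_{S^i}$. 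Both sets are nonempty by the linear-separability assumption on $S$ (and a fortiori on $S^i$).

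Next, I would use the standard monotonicity principle for constrained minimization: since $(\hv_S,\hb_S)\in F_S\subseteq F_{S^i}$, this pair is a feasible point of the $S^i$-problem, so
\begin{equation*}
\|\hv_{S^i}\|^2 \;=\; \min_{(v,b)\in F_{S^i}} \|v\|^2 \;\le\; \|\hv_S\|^2.
\end{equation*}
Taking square roots (both quantities are nonnegative) yields $\|\hv_{S^i}\|\le \|\hv_S\|$, which is the desired inequality.

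I do not anticipate any substantive obstacle here: the argument is a one-line feasibility comparison, and the only mild subtlety is checking that the SVM problem on $S^i$ is still well-posed, which follows from linear separability of $S$ implying linear separability of every subset $S^i$. No appeal to duality, KKT conditions, or uniqueness of the maximizer is needed.
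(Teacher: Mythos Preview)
Your proposal is correct and follows essentially the same argument as the paper: both write down the two hard-margin SVM problems, observe that removing the $i$th sample deletes one constraint so that $(\hv_S,\hb_S)$ remains feasible for the $S^i$-problem, and conclude $\|\hv_{S^i}\|\le\|\hv_S\|$ by optimality. Your write-up is slightly more careful in spelling out the feasible-set inclusion and the well-posedness of the $S^i$-problem, but the underlying idea is identical.
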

\begin{proof}
By SVM's calculation, $\hat{v}_S$ can be calculated as
\begin{equation}
    \begin{aligned}
        \hv_S=&\min_{v,b}  \|v\|^2\\
        \text{s.t.}\ \ \ \ \ &  y_i(v^\T x_i - b) \geq 1 \quad \forall i \in \{1,\dots,n\}
    \end{aligned}
    \label{eqn:gammaS}
\end{equation}

On the other hand, $\hat{v}_{S^i}$ can be calculated as
\begin{equation}  
\begin{aligned}
    \hat{v}_{S^i}=&\min_{v,b}  \|v\|^2\\
    \text{s.t.}\ \ \ \ \ &  y_i(v^\T x_i - b) \geq 1 \quad \forall i \in \{1,\dots,i-1,i+1,\cdots,n\}
    \label{eqn:gammaSi}
\end{aligned}
\end{equation}
\Cref{eqn:gammaS} has one more constraint $y_i(v^\T x_i - b) \geq 1 $ than \Cref{eqn:gammaSi}. Therefore $\hv_S, \hb_S$ will always be a feasible solution to \Cref{eqn:gammaSi}. Therefore $\|\hat{v}_{S}\| \ge \|\hat{v}_{Si}\|$.
The proof is complete.
\end{proof}

\begin{lemma}
Given a linearly separable dataset $S=\{(x_1,y_1),(x_2,y_2),\cdots,(x_n,y_n)\}$, $x_i\in \mathbb{R}^{d_x}, \|x_i\|\le B, y_i\in \{-1,1\}$. We denote $\hat{w}_{S}=[\hv_{S}^\T,\hb_{S}]^\T$ as the SVM parameters trained on dataset $S$ and denote $\hat{w}_{S^i}=[\hv_{S^i}^\T,\hb_{S^i}]^\T$ as the SVM parameters trained on dataset $S^i$. All input vectors $x_i$ is bounded as $\|x_i\|\le B$. We denote $\gamma_S$ as the maximal margin between the separation plan $\hv_S^{\T}x-\hb_S=0$ and encoded features $x_i$. For some constant $\lambda$, we have
$$\|\hat{w}_{S^i}-\hat{w}_S\|\le \max\big\{\sqrt{\frac{2}{\lambda n}\left(1+\frac {B} {\gamma_S}\right)}, \frac{B+\sqrt{B^2+8n\lambda(1+B/\gamma_S)}}{2n\lambda} \big\}.$$
\label{lemma:svmdis}
\end{lemma}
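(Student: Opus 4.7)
The plan is to invoke a regularized soft-margin SVM formulation and run the standard Bousquet--Elisseeff leave-one-out analysis. Define
\[
F_S(w) = \sum_{j=1}^n \ell(y_j w^{\T} \tilde{x}_j) + \tfrac{n\lambda}{2}\|w\|^2,
\]
where $\ell$ is the hinge loss. By the convexity of $\ell$ (verified en passant in \Cref{lemma:cvx}) plus the quadratic regularizer, $F_S$ is $n\lambda$-strongly convex; let $\hat{w}_S$ denote its minimizer, and define $F_{S^i}, \hat{w}_{S^i}$ analogously by deleting the $i$-th term. The constant $\lambda$ appearing in the theorem plays the role of this regularization parameter; Lemmas \ref{lemma:bbound} and \ref{lemma:maxw} are used for the resulting primal estimates (either by a direct soft-margin argument or by taking $\lambda\downarrow 0$).

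Applying strong convexity at the two minimizers and summing the resulting inequalities gives
\[
F_S(\hat{w}_{S^i}) - F_S(\hat{w}_S) + F_{S^i}(\hat{w}_S) - F_{S^i}(\hat{w}_{S^i}) \;\geq\; n\lambda \|\hat{w}_{S^i} - \hat{w}_S\|^2.
\]
Because $F_S - F_{S^i}$ is precisely the single hinge term $\ell(y_i w^{\T}\tilde{x}_i)$, the left-hand side collapses and we obtain the key inequality
\[
n\lambda \|\hat{w}_{S^i} - \hat{w}_S\|^2 \;\leq\; \ell(y_i \hat{w}_{S^i}^{\T}\tilde{x}_i) - \ell(y_i \hat{w}_S^{\T}\tilde{x}_i).
\]

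Next, I would bound the right-hand side by \Cref{lemma:cvx}, which controls the hinge-loss difference by the maximum of three expressions: one involving only $(\hat{v}_{S^i}, \hat{b}_{S^i})$, one involving only $(\hat{v}_S, \hat{b}_S)$, and one of the form $\|\hat{v}_{S^i} - \hat{v}_S\| B + |\hat{b}_S| + |\hat{b}_{S^i}|$. Plugging in $\|\hat{v}_S\| = 1/\gamma_S$ and $\|\hat{v}_{S^i}\| \leq \|\hat{v}_S\|$ (\Cref{lemma:maxw}), the intercept estimates $|\hat{b}_S|, |\hat{b}_{S^i}| \leq 1 + B/\gamma_S$ (\Cref{lemma:bbound}), and $\|\hat{v}_{S^i} - \hat{v}_S\| \leq \|\hat{w}_{S^i} - \hat{w}_S\|$, the first two of the three bounds each reduce to $2(1 + B/\gamma_S)$, while the third reduces to $B d + 2(1 + B/\gamma_S)$ with $d := \|\hat{w}_{S^i} - \hat{w}_S\|$.

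The final step is case analysis on which of \Cref{lemma:cvx}'s three terms dominates. In the first two cases one gets the linear inequality $n\lambda d^2 \leq 2(1+B/\gamma_S)$, which gives $d \leq \sqrt{\tfrac{2}{\lambda n}(1+B/\gamma_S)}$; in the third case one gets the quadratic inequality $n\lambda d^2 - B d - 2(1+B/\gamma_S) \leq 0$, and the quadratic formula yields $d \leq \tfrac{B + \sqrt{B^2 + 8n\lambda(1+B/\gamma_S)}}{2n\lambda}$. Taking the worst of the two cases recovers exactly the claimed bound. The main obstacle I anticipate is the case analysis itself: one must argue that in every configuration of the hinge loss at the two witnesses at least one of these two inequalities must hold, and one must be careful that the primal estimates from \Cref{lemma:bbound} and \Cref{lemma:maxw} (stated for hard-margin SVM) transfer to the regularized minimizer, either via a continuity argument in $\lambda$ or by repeating those lemmas directly for the soft-margin objective.
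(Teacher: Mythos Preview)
Your proposal is correct and follows essentially the same route as the paper: cast the SVM as a $\lambda$-strongly convex regularized objective, derive the key inequality $n\lambda\|\hat w_{S^i}-\hat w_S\|^2 \le \ell(\hat w_{S^i},x_i)-\ell(\hat w_S,x_i)$, bound the right-hand side via \Cref{lemma:cvx}, simplify with \Cref{lemma:bbound} and \Cref{lemma:maxw}, and finish with exactly the same two-case analysis. The only noteworthy difference is that the paper uses a one-sided strong-convexity bound together with an $n\approx n-1$ approximation to reach the key inequality, whereas your symmetric ``sum both strong-convexity inequalities'' argument gets there exactly; your concern about transferring the hard-margin estimates to the soft-margin minimizer is legitimate and is something the paper does not address.
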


\begin{proof}
Since $\hat{w}_{S}=[\hv_S,\hb_S]$ minimizes the SVM optimization target function $$\begin{aligned}
    \LL_S(v,b)=\lambda \lVert v \rVert^2 +\left[\frac 1 n \sum_{i=1}^n \max\left(0, 1 - y_i(v^\mathsf{T} x_i - b)\right) \right]
\end{aligned}$$
while $\hat{w}_{S^i}=[\hv_{S^i},\hb_{S^i}]$ minimizes the SVM optimization target function $$\begin{aligned}
    \LL_{S^i}(v,b)=\lambda \lVert v \rVert^2 +\left[\frac 1 {n-1} \sum_{j=1,j\ne i}^n \max\left(0, 1 - y_j(v^\mathsf{T} x_j - b)\right) \right] 
\end{aligned}$$

We denote the function $\tilde{L}_S(v,b)=\frac 1 n \sum_{i=1}^n \max\left(0, 1 - y_i(v^\mathsf{T} x_i - b)\right)$ on dataset $S$. From \Cref{lemma:cvx}, $\tilde{L}_S$ is a convex function while $(v,b) \rightarrow\lambda\|v\|^2$ is a 2$\lambda$-strongly convex. Therefore, $\LL_S(v,b)$ is also a 2$\lambda$-strongly convex function. Then, with the property of the strongly convex function, $\forall w=[v^{\T},b]^{\T}$, we have
\begin{equation}
    \LL_S(w)-\LL_S(\hat{w}_S)\ge \lambda\|w-\hat{w}_S\|^2.
    \label{eqn:scp}
\end{equation}
On the other hand, 
$$\begin{aligned}
    \LL_S(\hat{w}_{S^i})-\LL_S(\hat{w}_S)&=\tilde{L}_S(\hat{w}_{S^i})+\lambda\|\hat{w}_{S^i}\|^2-(\tilde{L}_S(\hat{w}_S)+\lambda\|\hat{w}_S\|^2)\\
    &\approx \tilde{L}_{S^i}(\hat{w}_{S^i})+\lambda\|\hat{w}_{S^i}\|^2-(\tilde{L}_{S^i}(\hat{w}_S)+\lambda\|\hat{w}_S\|^2)+\frac{\ell(\hat{w}_{S^i},x_i)-\ell(\hat{w}_{S},x_i)}{n} && (\text{as }n\approx n-1)\\
    &\le \frac{\ell(\hat{w}_{S^i},x_i)-\ell(\hat{w}_{S},x_i)}{n}\\
    &= \frac{\max(0, 1 - y_i(\hv^\T_{S^i} x_i - \hb_{S^i}))-\max(0, 1 - y_i(\hv^\T_{S} x_i - \hb_{S}))}{n}\\
    &\le \frac{1}{n}\max(1+\|\hv^\T_{S^i}\|\|x_i\|+|\hb_{S^i}|,1+\|\hv^\T_{S}\|\|x_i\|+|\hb_{S}|,\|\hv^\T_{S^i}-\hv^\T_{S}\|\|x_i\|+|\hb_{S^i}|+|\hb_{S}|)&& \text{(\Cref{lemma:cvx})} \\
    &\le \frac{1}{n} \biggl(\max(1+\|\hv^\T_{S^i}\|\|x_i\|+1+B\|\hv^\T_{S^i}\|,1+\|\hv^\T_{S}\|\|x_i\|+1+B\|\hv^\T_{S}\|,\\
    &\ \ \ \ \ \ \ \ \ \ \ \|\hv^\T_{S^i}-\hv^\T_{S}\|\|x_i\|+1+B\|\hv^\T_{S}\|+1+B\|\hv^\T_{S^i}\|\biggr)&& \text{(\Cref{lemma:bbound})} \\ 
    &\le \frac{1}{n} \biggl(\max(2+2\|\hv^\T_{S^i}\|B,2+2\|\hv^\T_{S}\|B, 2+\|\hv^\T_{S^i}-\hv^\T_{S}\|B+B\|\hv^\T_{S}\|+B\|\hv^\T_{S^i}\|\biggr) \\
    &\le \frac{1}{n} \biggl(\max(2+2\|\hv^\T_{S}\|B, 2+\|\hv^\T_{S^i}-\hv^\T_{S}\|B+2B\|\hv^\T_{S}\|\biggr) && \text{(\Cref{lemma:maxw})} \\
\end{aligned}$$

Plug this bound into \Cref{eqn:scp}, we have:

If $2+2\|\hv^\T_{S}\|B \ge 2+\|\hv^\T_{S^i}-\hv^\T_{S}\|B+2B\|\hv^\T_{S}\|$:

$$\begin{aligned}
\lambda\|\hat{w}_{S^i}-\hat{w}_S\|^2 &\le \frac{1}{n} \biggl(\max(2+2\|\hv^\T_{S}\|B, 2+\|\hv^\T_{S^i}-\hv^\T_{S}\|B+2B\|\hv^\T_{S}\|\biggr)=\frac{2+2\|\hv^\T_{S}\|B}{n}\\
\|\hat{w}_{S^i}-\hat{w}_S\| &\le \sqrt{\frac{(2+2B\|\hat{v}_{S}\|)}{\lambda n}}=\sqrt{\left(2+\frac {2B} {\gamma_S}\right)\frac{1}{\lambda n}}.
\end{aligned}
$$

If $2+2\|\hv^\T_{S}\|B < 2+\|\hv^\T_{S^i}-\hv^\T_{S}\|B+2B\|\hv^\T_{S}\|$:

$$\begin{aligned}
    \lambda\|\hat{w}_{S^i}-\hat{w}_S\|^2 &\le \frac{1}{n} \biggl(\max(2+2\|\hv^\T_{S}\|B, 2+\|\hv^\T_{S^i}-\hv^\T_{S}\|B+2B\|\hv^\T_{S}\|\biggr)=\frac{2+\|\hv^\T_{S^i}-\hv^\T_{S}\|B+2B\|\hv^\T_{S}\|}{n}\\
    \Rightarrow\ \ \ \ \ \  &n\lambda\|\hat{w}_{S^i}-\hat{w}_S\|^2-B\|\hat{w}_{S^i}-\hat{w}_S\|-(2+2B\|\hv^\T_{S}\|)\le 0.
    \end{aligned}
    $$
    This is a quadratic function of $\|\hat{w}_{S^i}-\hat{w}_S\|$, the maximal feasible value for $\|\hat{w}_{S^i}-\hat{w}_S\|$ can be calculated as
    $$\|\hat{w}_{S^i}-\hat{w}_S\|\le \frac{B+\sqrt{B^2+8n\lambda(1+B/\gamma_S)}}{2n\lambda}$$

    Therefore, 

    $$\|\hat{w}_{S^i}-\hat{w}_S\|\le \max\big\{\sqrt{\frac{2}{\lambda n}\left(1+\frac {B} {\gamma_S}\right)}, \frac{B+\sqrt{B^2+8n\lambda(1+B/\gamma_S)}}{2n\lambda} \big\}$$

It should be noted that, here, $x_i$ is equivalent to $E(x_i)$ in the main paper. We just use $x_i$ to simplify the notation. 
The proof is complete.

\end{proof}

\mainthm*

\begin{proof}

By utilizing the telescope sum, we have
$$\begin{aligned}
\left|\frac{w_{S^i}}{\|w_{S^i}\|}-\frac{w_{S}}{\|w_{S}\|}\right| &= \left|\frac{w_{S^i}}{\|w_{S^i}\|}-\frac{\hat{w}_{S^i}}{\|\hat{w}_{S^i}\|}-(\frac{w_{S}}{\|w_{S}\|}-\frac{\hat{w}_{S}}{\|\hat{w}_{S}\|})+\frac{\hat{w}_{S^i}}{\|\hat{w}_{S^i}\|}-\frac{\hat{w}_{S}}{\|\hat{w}_{S}\|}\right| \\
& \le \left|\frac{w_{S^i}}{\|w_{S^i}\|}-\frac{\hat{w}_{S^i}}{\|\hat{w}_{S^i}\|}\right|+\left|\frac{w_{S}}{\|w_{S}\|}-\frac{\hat{w}_{S}}{\|\hat{w}_{S}\|}\right|+\left|\frac{\hat{w}_{S^i}}{\|\hat{w}_{S^i}\|}-\frac{\hat{w}_{S}}{\|\hat{w}_{S}\|}\right|\\
& \le \frac{C\log \log t}{\log t}+\left|\frac{\hat{w}_{S^i}}{\|\hat{w}_{S^i}\|}-\frac{\hat{w}_{S}}{\|\hat{w}_{S}\|}\right| && \text{(Theorem 5 \cite{soudry2018implicit})}\\
&\le \frac{C\log \log t}{\log t}+\nu\|\hat{w}_{S^i}-\hat{w}_{S}\| && \text{(\Cref{lemma:mlip})}\\
&\le \frac{C\log \log t}{\log t}+\nu \max\big\{\sqrt{\frac{2}{\lambda n}\left(1+\frac {B} {\gamma_S}\right)}, \frac{B+\sqrt{B^2+8n\lambda(1+B/\gamma_S)}}{2n\lambda} \big\} && \text{(\Cref{lemma:svmdis})}
\end{aligned}$$
The proof is complete.
\end{proof}

\section{Proof of \Cref{thm:thmlnsr}}\label{sec:A-thmlnsr}

\thmlnsr*

\begin{proof}
With a similar idea as \citet{shalev2014understanding}, we have
$$\begin{aligned}
    \LL_S(\hat{w}_{S^i})-\LL_S(\hat{w}_S)&=\tilde{L}_S(\hat{w}_{S^i})+\lambda\|\hat{w}_{S^i}\|^2-(\tilde{L}_S(\hat{w}_S)+\lambda\|\hat{w}_S\|^2)\\
    &\approx \tilde{L}_{S^i}(\hat{w}_{S^i})+\lambda\|\hat{w}_{S^i}\|^2-(\tilde{L}_{S^i}(\hat{w}_S)+\lambda\|\hat{w}_S\|^2)+\frac{\ell(\hat{w}_{S^i},x_i)-\ell(\hat{w}_{S},x_i)}{n}\\
    &\le \frac{\ell(\hat{w}_{S^i},x_i)-\ell(\hat{w}_{S},x_i)}{n}\\
    &\le \frac{L \|\hat{w}_{S}-\hat{w}_{S^i}\|}{n}\\
\end{aligned}$$
Plug this bound into \Cref{eqn:scp}, we have:
$$\begin{aligned}
    \lambda\|\hat{w}_{S^i}-\hat{w}_S\|^2 &\le \frac{L \|\hat{w}_{S}-\hat{w}_{S^i}\|}{n}\\
    \|\hat{w}_{S^i}-\hat{w}_S\| &\le \frac{L }{\lambda n}
    \end{aligned}\\$$

Similar to \Cref{thm:mainthm}, we have
$$\E_{S}\left[\left\|\frac{\A(S^{i})}{\|\A(S^{i})\|}-\frac{\A(S)}{\|\A(S)\|}\right\|\right]\le \frac{C\log \log t}{\log t}+\nu \frac{L }{\lambda n}$$
The proof is complete.
\end{proof}

\section{Proof of \Cref{thm:thmmhl}}\label{sec:A-thmmhl}

To prove \Cref{thm:thmmhl}, we first prove \Cref{lemma:exp0} which shows that the expectation of the direction of the weight directs at the direction of the SVM separation plane.

\begin{lemma}
If the dataset $S$ is linearly separable, and the linear model is optimized by gradient descent method with $t$ iteration steps and the stepsize $\eta< 2\beta^{-1}\sigma_{\max}^{-1}(X )$. 
$$\E \frac{ w(t)}{\| w(t)\|}=\frac{\hat{w}_{S}}{\|\hat{w}_{S}\|} $$
\label{lemma:exp0}
\end{lemma}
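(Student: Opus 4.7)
The plan is to invoke the main convergence theorem of \citet{soudry2018implicit} (their Theorem 3), which establishes that for any initialization $w(0)$, gradient descent with step size $\eta<2\beta^{-1}\sigma_{\max}^{-1}(X)$ on the logistic-type loss with a linearly separable dataset satisfies the directional convergence
\begin{equation*}
\lim_{t\to\infty}\frac{w(t)}{\|w(t)\|}=\frac{\hat{w}_S}{\|\hat{w}_S\|}.
\end{equation*}
The crucial point is that the limiting direction is the SVM direction, which depends only on $S$ and not on the random initialization; equivalently, for almost every initialization the trajectory of normalized iterates converges to the same deterministic limit.

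Next I would take the expectation (over the random initialization of the head) and exchange it with the limit. Since $\|w(t)/\|w(t)\|\|=1$ for all $t$, the normalized iterates are uniformly bounded, so dominated convergence applies and yields
\begin{equation*}
\lim_{t\to\infty}\mathbb{E}\,\frac{w(t)}{\|w(t)\|}=\mathbb{E}\lim_{t\to\infty}\frac{w(t)}{\|w(t)\|}=\mathbb{E}\,\frac{\hat{w}_S}{\|\hat{w}_S\|}=\frac{\hat{w}_S}{\|\hat{w}_S\|},
\end{equation*}
where the last step uses that $\hat{w}_S/\|\hat{w}_S\|$ is a constant once $S$ is fixed.

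The main obstacle is conceptual rather than computational: the lemma as stated is an equality at finite $t$, whereas \citet{soudry2018implicit} only give an asymptotic (and in fact $\mathcal{O}(\log\log t/\log t)$-slow) convergence. The statement therefore has to be interpreted in the large-$t$ limit, with the residual error at finite $t$ controlled by the same rate $\mathcal{O}(\log\log t/\log t)$ that already appears as the first term of \Cref{eqn:maineqn}. This is precisely how the constant $\xi$ enters the bound of \Cref{thm:thmmhl}: the finite-$t$ deviation of $\mathbb{E}\,w(t)/\|w(t)\|$ from $\hat{w}_S/\|\hat{w}_S\|$ is absorbed into a multiplicative factor in front of the $\log\log t/\log t$ term, after which a standard concentration argument over the $H$ independent heads produces the $1/\sqrt{H}$ improvement claimed in the corollary.
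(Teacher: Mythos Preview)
Your argument is sound for the asymptotic reading of the lemma, and you are right to flag that an exact equality at finite $t$ cannot follow from \citet{soudry2018implicit} alone. However, the paper does \emph{not} argue via dominated convergence on the normalized iterates. Instead, it unrolls the gradient-descent recursion,
\[
w(t)\approx -\eta\sum_{s=1}^{t-1}\nabla\mathcal{L}(w(s)),
\]
ignoring $w(0)$ because $\|w(t)\|\to\infty$, introduces a single ``expected gradient'' $g_w:=\E\nabla\mathcal{L}(w(t))$ treated as time-independent, and identifies $-g_w/\|g_w\|$ with $\hat w_S/\|\hat w_S\|$ by writing $w(\infty)/\|w(\infty)\|$ as a limit of time averages of gradients. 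This yields $\E w(t)=-\eta(t-1)g_w=D\,\hat w_S$ for some scalar $D\ge 0$, from which the paper concludes the claimed identity for the normalized expectation.

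The two routes buy different things. The paper's expansion is what later feeds into the proof of \Cref{thm:thmmhl}: it is the relation $\E w(t)\propto \hat w_S$ (hence $\E r_h(t)=0$ for the component orthogonal to $\hat w_S$) that is actually used there, and that relation concerns $\E w(t)$ rather than $\E[w(t)/\|w(t)\|]$. Your dominated-convergence argument is cleaner and fully rigorous for the limit statement, but it does not by itself give $\E r_h(t)=0$ at finite $t$, which is the hypothesis the concentration step needs. Conversely, the paper's derivation reaches that conclusion only by (i) treating $\E\nabla\mathcal{L}(w(t))$ as constant in $t$ and (ii) passing from $\E w(t)\parallel \hat w_S$ to $\E[w(t)/\|w(t)\|]=\hat w_S/\|\hat w_S\|$, neither of which is justified without further approximation. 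In short: your proof is a genuinely different and more honest asymptotic argument, while the paper's proof is a heuristic finite-$t$ computation whose output ($\E w(t)\propto\hat w_S$) is what is really consumed downstream.
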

\begin{proof}
We denote $\E \nabla \LL(w(t))=g_w$.
As indicated by \citet{soudry2018implicit}, $\|w (t) \|\rightarrow \infty$ and thus $w(0)$ can be ignored. By the definition of the gradient descent, we have
$$\begin{aligned}
    w(t+1)&=w(t)-\eta \nabla \LL (w(t))\\
    w(\infty)&=w(0)-\eta \nabla \LL (w(1))-\eta \nabla \LL (w(2))-\cdots\\
    &\approx-\eta \nabla \LL (w(1))-\eta \nabla \LL (w(2))-\cdots\\
    &=-\eta \sum_{i=1}^\infty\nabla \LL (w(i))
\end{aligned}$$
Following \citet{soudry2018implicit}, we have
$$\begin{aligned}  
    \frac{\hat{w}_S}{\|\hat{w}_S\|}&=\frac{w(\infty)}{\|w(\infty)\|}\\
    &=\frac{-\eta \sum_{i=1}^\infty\nabla \LL (w(i))}{\|-\eta \sum_{i=1}^\infty\nabla \LL (w(i))\|}\\
    &=\frac{-\eta \frac 1 N  \sum_{i=1}^\infty\nabla \LL (w(i))}{\|-\eta \frac 1 N  \sum_{i=1}^\infty\nabla \LL (w(i))\|}\\
    &=\frac{-\eta \frac 1 N \lim_{N\rightarrow \infty} \sum_{i=1}^N\nabla \LL (w(i))}{\|-\eta \frac 1 N \lim_{N\rightarrow \infty} \sum_{i=1}^N\nabla \LL (w(i))\|}\\
    &=\frac{- \E \nabla \LL(w(t))}{\| \E \nabla \LL(w(t))\|}\\
    &=\frac{- g_w}{\| g_w\|}\\
\end{aligned}$$
$$\begin{aligned}
    \E w(t)&=\E [w(0)-\eta \nabla \LL (w(1))-\eta \nabla \LL (w(2))-\cdots-\eta \nabla \LL (w(t-1))]\\
    &\approx\E [-\eta \nabla \LL (w(1))-\eta \nabla \LL (w(2))-\cdots-\eta \nabla \LL (w(t-1))]\\
    &= -\eta \E\nabla \LL (w(1))-\eta \E\nabla  \LL (w(2))-\cdots-\eta \E\nabla \LL (w(t-1))\\
    &= -\eta g_w-\eta g_w-\cdots-\eta g_w\\
    &= -\eta(t-1) g_w\\
    &= -\eta(t-1) \|g_w\|(-\frac{\hat{w}_S}{\|\hat{w}_S\|})\\
\end{aligned}$$
Therefore, we have the expectation $\E w(t)=D \hat{w}_S$, where $D=\eta(t-1) \|g_w\|\frac{1}{\|\hat{w}_S\|}$ is a non-negative scalar. Therefore,
$$\E \frac{ w(t)}{\| w(t)\|}=\frac{\hat{w}_{S}}{\|\hat{w}_{S}\|} $$
The proof is complete.
\end{proof}

\thmmhl*

\begin{proof}
We denote $w_h(t)$ as the weight for head $h$ at time $t$. We denote the projection of $w_h(t)$ onto $w_S$ as $$\tilde{w}_h(t)=\frac{w_Sw_S^{\T}}{w_S^{\T}w_S}w_h(t).$$ We also denote the projection on the orthogonal space of $w_S$ as $$r_h(t)=(I-\frac{w_Sw_S^{\T}}{w_S^{\T}w_S})w_h(t),$$ 
where $I$ is the identity matrix. It can be easily verfied that $w_h(t)=\tilde{w}_h(t)+r_h(t)$ and $\tilde{w}_h(t)^{\T}r_h(t)=0$. As the expectation of $w_h(t)$ is along the direction of $w_S$ while $\|w_h(t)\|\rightarrow\infty$, the  residual norm $\|r_h(t)\|$ is negligible compared with the projection norm $\|\tilde{w}_h(t)\|$, namely, $\|r_h(t)\|\ll\|\tilde{w}_h(t)\|$. Then, we have

$$\begin{aligned}
    \left\|\frac{w_h(t)}{\|w_h(t)\|}-\frac{w_S}{\|w_S\|}\right\|&=\left\|\frac{\tilde{w}_h(t)+r_h(t)}{\|\tilde{w}_h(t)+r_h(t)\|}-\frac{w_S}{\|w_S\|}\right\|\\
    &=\left\|\frac{\tilde{w}_h(t)+r_h(t)}{\|\sqrt{\|\tilde{w}_h(t)\|^2+2\tilde{w}_h(t)^{\T}r_h(t)+\|r_h(t)\|^2}}-\frac{w_S}{\|w_S\|}\right\|\\
    &=\left\|\frac{\tilde{w}_h(t)+r_h(t)}{\|\sqrt{\|\tilde{w}_h(t)\|^2+\|r_h(t)\|^2}}-\frac{w_S}{\|w_S\|}\right\|\\
    &\approx\left\|\frac{\tilde{w}_h(t)+r_h(t)}{\|\sqrt{\|\tilde{w}_h(t)\|^2}}-\frac{w_S}{\|w_S\|}\right\| && (\text{as }\|r_h(t)\|\ll\|\tilde{w}_h(t)\|)\\
    &=\left\|\frac{\tilde{w}_h(t)}{\|\tilde{w}_h(t)\|}+\frac{r_h(t)}{\|\tilde{w}_h(t)\|}-\frac{w_S}{\|w_S\|}\right\|\\
    &=\left\|\frac{r_h(t)}{\|\tilde{w}_h(t)\|}\right\|\\
\end{aligned}$$
Therefore, from the results of \citet{soudry2018implicit}, we have
\begin{equation}
    \left\|\frac{r_h(t)}{\|\tilde{w}_h(t)\|}\right\|\approx \left\|\frac{w_h(t)}{\|w_h(t)\|}-\frac{w_S}{\|w_S\|}\right\|\le \frac{C\log \log t}{\log t}
    \label{eqn:diffbound}
\end{equation}

On the other hand, for the multi-head classifier weight $\bar{w}_h(t)$, we have $$\bar{w}_h(t)=\frac 1 H \sum_{h=1}^Hw_h(t)=\frac 1 H \sum_{h=1}^H \tilde{w}_h(t)+r_h(t)$$

We use $\xi_h\ge 1$ as a adjusting factor such that $$\xi_h=\frac{H\|\tilde{w}_h(t)\|}{\|\sum_{h=1}^H \tilde{w}_h(t)\|}$$

As assumed in this Theorem that $\bar{w}^\T\hw_S\ne 0$, it implies that $\sum_{h=1}^H \tilde{w}_h(t)\ne0$. Therefore, all $\xi_h$s are uniformly bounded as 
$$\xi_h\le \xi=\frac{H\max_h\|\tilde{w}_h(t)\|}{\|\sum_{h=1}^H \tilde{w}_h(t)\|},$$

where $\xi$ is a finite constant.

Then, we can get
$$\begin{aligned}
    \left\|\frac{\bar{w}_h(t)}{\|\bar{w}_h(t)\|}-\frac{w_S}{\|w_S\|}\right\|&=\left\|\frac{\frac 1 H \sum_{h=1}^H \tilde{w}_h(t)+r_h(t)}{\|\frac 1 H \sum_{h=1}^H \tilde{w}_h(t)+r_h(t)\|}-\frac{w_S}{\|w_S\|}\right\|\\
    &=\left\|\frac{ \sum_{h=1}^H \tilde{w}_h(t)+r_h(t)}{\|\sum_{h=1}^H \tilde{w}_h(t)+r_h(t)\|}-\frac{w_S}{\|w_S\|}\right\|\\
    &=\left\|\frac{ \sum_{h=1}^H \tilde{w}_h(t)+r_h(t)}{\sqrt{\|\sum_{h=1}^H \tilde{w}_h(t)\|^2+2(\sum_{h=1}^H \tilde{w}_h(t))^{\T}(\sum_{h=1}^Hr_h(t))+\|\sum_{h=1}^Hr_h(t)\|^2}}-\frac{w_S}{\|w_S\|}\right\|\\
    &=\left\|\frac{ \sum_{h=1}^H \tilde{w}_h(t)+r_h(t)}{\sqrt{\|\sum_{h=1}^H \tilde{w}_h(t)\|^2+\|\sum_{h=1}^Hr_h(t)\|^2}}-\frac{w_S}{\|w_S\|}\right\|\\
    &\approx\left\|\frac{ \sum_{h=1}^H \tilde{w}_h(t)+r_h(t)}{\|\sum_{h=1}^H \tilde{w}_h(t)\|}-\frac{w_S}{\|w_S\|}\right\|\\
    &=\left\|\frac{ \sum_{h=1}^H \tilde{w}_h(t)}{\|\sum_{h=1}^H \tilde{w}_h(t)\|}+\frac{ \sum_{h=1}^H r_h(t)}{\|\sum_{h=1}^H \tilde{w}_h(t)\|}-\frac{w_S}{\|w_S\|}\right\|\\
    &=\left\|\frac{ \sum_{h=1}^H r_h(t)}{\|\sum_{h=1}^H \tilde{w}_h(t)\|}\right\|\\
    &=\left\|\sum_{h=1}^H \frac{1}{H}\xi_h\frac{  r_h(t)}{\| \tilde{w}_h(t)\|}\right\|\\
    &=\left\|\frac{1}{H}\sum_{h=1}^H \xi_h\frac{  r_h(t)}{\| \tilde{w}_h(t)\|}\right\|
\end{aligned}$$

As we also have 
$$\begin{aligned}
    \E \xi_h\frac{  r_h(t)}{\| \tilde{w}_h(t)\|} &=0 && \text{(\Cref{lemma:exp0})}\\
    \E \|\xi_h\frac{  r_h(t)}{\| \tilde{w}_h(t)\|} \|^2&\le \left(\frac{C\xi\log \log t}{\log t}\right)^2, && \text{(\Cref{eqn:diffbound})}
\end{aligned}
$$

Then, By Lemma 18  \citet{kohler2017sub}, we can conclude that with probability $1-\delta$, (here $\delta > \exp(\frac 1 4 - \frac H 8)$) we have
$$\left\|\frac{\bar{w}_h(t)}{\|\bar{w}_h(t)\|}-\frac{w_S}{\|w_S\|}\right\|\approx \left\|\frac{1}{H}\sum_{h=1}^H \xi_h\frac{  r_h(t)}{\| \tilde{w}_h(t)\|}\right\| \le \sqrt{\frac{8(\frac 1 4 +\log \frac 1 \delta)}{H}}\frac{C\xi\log \log t}{\log t} $$

It should be noted that Lemma 18 \cite{kohler2017sub} is a Bernstein inequality and it holds when the probability $\delta$ is larger than a certain quantity which depends on the sample count $n$. For a fixed $\delta$, we require $H>2+8\ln\frac 1 \delta$. Therefore, with probability $1-\delta$, we have

$$\begin{aligned}
    \left|\frac{\bar{w}_{S^i}}{\|\bar{w}_{S^i}\|}-\frac{\bar{w}_{S}}{\|\bar{w}_{S}\|}\right| &= \left|\frac{\bar{w}_{S^i}}{\|\bar{w}_{S^i}\|}-\frac{\hat{w}_{S^i}}{\|\hat{w}_{S^i}\|}-(\frac{\bar{w}_{S}}{\|\bar{w}_{S}\|}-\frac{\hat{w}_{S}}{\|\hat{w}_{S}\|})+\frac{\hat{w}_{S^i}}{\|\hat{w}_{S^i}\|}-\frac{\hat{w}_{S}}{\|\hat{w}_{S}\|}\right| \\
    & \le \left|\frac{\bar{w}_{S^i}}{\|\bar{w}_{S^i}\|}-\frac{\hat{w}_{S^i}}{\|\hat{w}_{S^i}\|}\right|+\left|\frac{\bar{w}_{S}}{\|\bar{w}_{S}\|}-\frac{\hat{w}_{S}}{\|\hat{w}_{S}\|}\right|+\left|\frac{\hat{w}_{S^i}}{\|\hat{w}_{S^i}\|}-\frac{\hat{w}_{S}}{\|\hat{w}_{S}\|}\right| \\
    & \le \sqrt{\frac{8(\frac 1 4 +\log \frac 1 \delta)}{H}}\frac{C\xi\log \log t}{\log t}+\left|\frac{\hat{w}_{S^i}}{\|\hat{w}_{S^i}\|}-\frac{\hat{w}_{S}}{\|\hat{w}_{S}\|}\right| \\
    &\le \sqrt{\frac{8(\frac 1 4 +\log \frac 1 \delta)}{H}}\frac{C\xi\log \log t}{\log t}+\nu\|\hat{w}_{S^i}-\hat{w}_{S}\| && \text{(\Cref{lemma:mlip})}\\
    &\le \sqrt{\frac{8(\frac 1 4 +\log \frac 1 \delta)}{H}}\frac{C\xi\log \log t}{\log t} +\nu \max\big\{\sqrt{\frac{2}{\lambda n}\left(1+\frac {B} {\gamma_S}\right)}, \frac{B+\sqrt{B^2+8n\lambda(1+B/\gamma_S)}}{2n\lambda} \big\} && \text{(\Cref{lemma:svmdis})}
\end{aligned}$$

The proof is complete.
\end{proof}

\section{Miscellaneous}\label{sec:A-descentlemma}

\begin{lemma}[Descent Lemma]
Suppose $f$ is $\beta$-smooth. If the learning rate $\eta<1 / \beta$, we have
$$
f\left(w_{t+1}\right) \leq f\left(w_t\right)-\frac{\eta}{2} \cdot\left\|\nabla f\left(w_t\right)\right\|_2^2
$$
\label{lemma:descent}
\end{lemma}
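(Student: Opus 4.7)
The plan is to derive the standard quadratic upper bound implied by $\beta$-smoothness and then plug in the gradient descent update. First, I would apply the fundamental theorem of calculus along the line segment joining $w_t$ and $w_{t+1}$, writing
\[
f(w_{t+1}) - f(w_t) = \int_0^1 \langle \nabla f(w_t + s(w_{t+1}-w_t)), \, w_{t+1}-w_t\rangle \, ds.
\]
Splitting this into $\langle \nabla f(w_t), w_{t+1}-w_t\rangle$ plus an error term and bounding the error using the $\beta$-Lipschitz property of $\nabla f$ (Cauchy--Schwarz followed by smoothness, exactly as in the proof of \Cref{lemma:selfbound}), I obtain the classical inequality
\[
f(w_{t+1}) \leq f(w_t) + \langle \nabla f(w_t), w_{t+1}-w_t\rangle + \tfrac{\beta}{2}\,\|w_{t+1}-w_t\|_2^2.
\]

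Next I would substitute the gradient descent update $w_{t+1} = w_t - \eta\,\nabla f(w_t)$, which collapses the inner-product term to $-\eta\|\nabla f(w_t)\|_2^2$ and the quadratic term to $\tfrac{\beta\eta^2}{2}\|\nabla f(w_t)\|_2^2$. Combining,
\[
f(w_{t+1}) \leq f(w_t) - \eta\bigl(1 - \tfrac{\beta\eta}{2}\bigr)\|\nabla f(w_t)\|_2^2.
\]
Finally, the stepsize condition $\eta < 1/\beta$ yields $\beta\eta/2 < 1/2$, so the coefficient $\eta(1-\beta\eta/2)$ is at least $\eta/2$, which gives exactly the stated bound.

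There is no real obstacle here; the only subtlety is making sure the smoothness inequality is applied correctly to get the quadratic upper bound (the integral step), and then using the bound $\beta\eta\leq 1$ at the very end to absorb the quadratic error into half the linear decrease. Since the paper already uses essentially the same integration trick inside the proof of \Cref{lemma:selfbound}, I would simply invoke that same computation rather than re-derive it from scratch.
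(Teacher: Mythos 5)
Your proposal is correct and follows essentially the same route as the paper: establish the quadratic upper bound $f(w_{t+1}) \leq f(w_t) + \langle \nabla f(w_t), w_{t+1}-w_t\rangle + \tfrac{\beta}{2}\|w_{t+1}-w_t\|_2^2$ from $\beta$-smoothness, substitute the gradient step, and absorb the quadratic term using $\eta \leq 1/\beta$. The only cosmetic difference is that you obtain the quadratic bound via the integral representation (as in the proof of \Cref{lemma:selfbound}) while the paper's proof of \Cref{lemma:descent} invokes a Hessian-based expansion; your final algebra, giving the coefficient $\eta(1-\beta\eta/2)\geq\eta/2$, is in fact cleaner than the paper's.
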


\begin{proof}

$$
\begin{aligned}
f\left(w_{t+1}\right) &=f\left(w_t-\eta \nabla f\left(w_t\right)\right) \\
& \leq f\left(w_t\right)+\left\langle\nabla f\left(w_t\right),-\eta \nabla f\left(w_t\right)\right\rangle+\frac{1}{2}\eta \nabla f\left(w_t\right)^T \nabla^2 f\left(w_t\right)\eta \nabla f\left(w_t\right)  \\
& \leq f\left(w_t\right)+\left\langle\nabla f\left(w_t\right),-\eta \nabla f\left(w_t\right)\right\rangle+\frac{\beta}{2}\left\|\eta \nabla f\left(w_t\right)\right\|_2^2  \\
&=f\left(w_t\right)-\frac{\left(\eta-\eta^2 \beta\right)}{2}\left\|\nabla f\left(w_t\right)\right\|_2^2 \\
& \leq f\left(w_t\right)-\frac{\eta}{2} \cdot\left\|\nabla f\left(w_t\right)\right\|_2^2\ \ \ \ \ \ \ \ \ \ \ \ \  (\text{As }\eta \leq 1 / \beta)
\end{aligned}
$$
The proof is complete.

\end{proof}

\begin{figure}[h]
    \centering
    \includegraphics[width=1.\columnwidth]{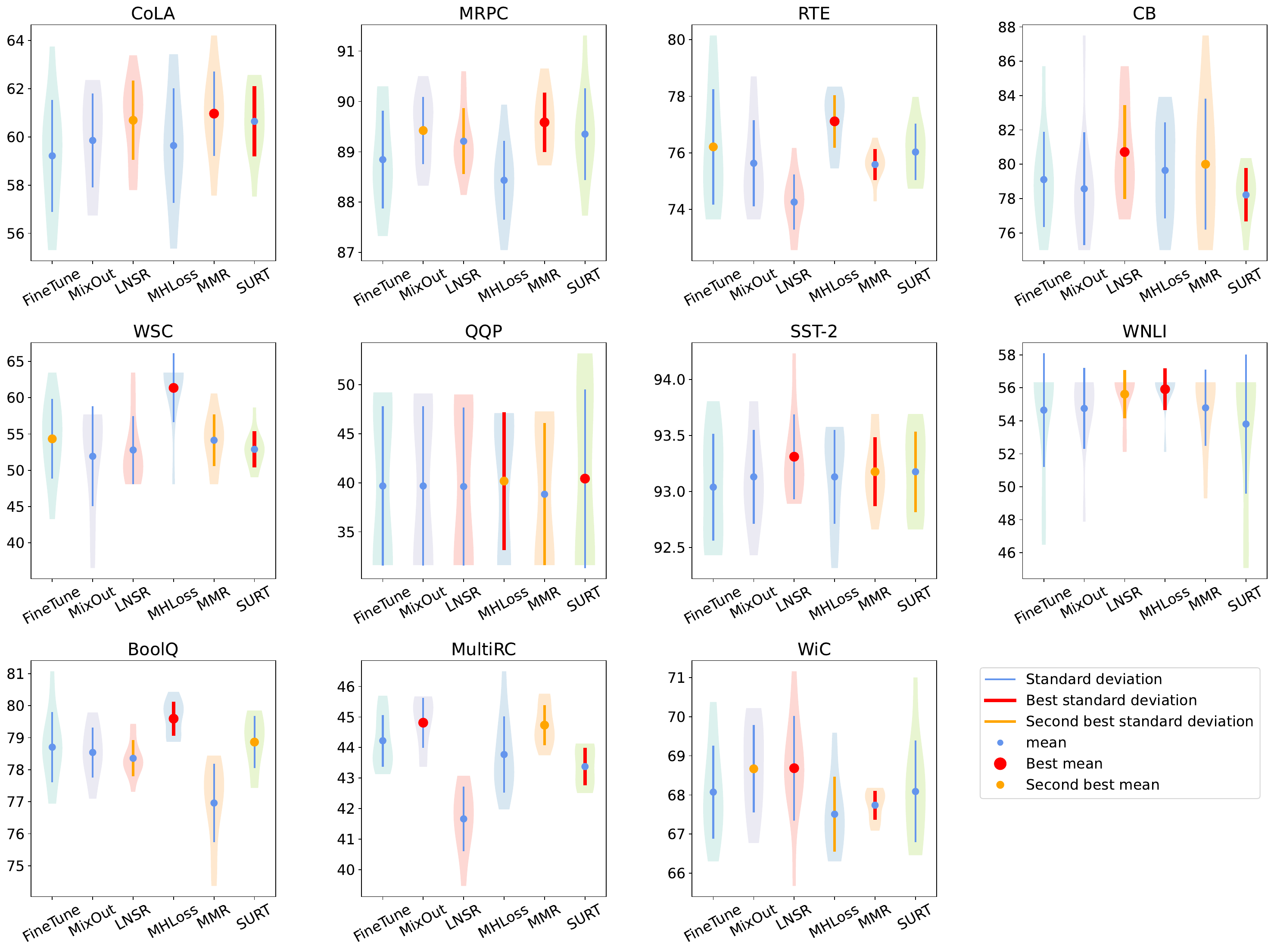}
    \caption{Violin plot for the main experiment.}
    \label{fig:varbox} 

\end{figure}

\section{Violin Plot for the Main Experiment}\label{sec:varbox}
To better illustrate the score distribution of the main experiment in \Cref{tab:main}. We draw the violin plot for each task as shown in \Cref{fig:varbox} which shows the probability density of the data at different values. From the figure, we can draw the same conclusions as that in the main experiment.

\section{Distance between $w_0$ and $w_*$}\label{sec:pardist}
In \Cref{sec:fulltune}, we employ Taylor expansion to approximate the original function $f(w,x)$ at the point $w_0$. To validate the reliability of Taylor expansion, we first demonstrate the proximity between the finetuned parameter $w_*$ and the pre-trained parameter $w_0$. We calculate the distance between $w_0$ and $w_*$ as $\|w_*-w_0\|$ and normalize it with the norm of $w_0$ to better reflect the parameter changes in $w_*$. We report the 1-norm, 2-norm, and $\infty$-norm for the weight vectors. The results are presented in \Cref{tab:taylor}. As shown in the results, after fine-tuning the model, the parameter the new parameter $w_*$ is very close to the pre-trained parameter $w_0$, changing $w_0$ by only 1\% in terms of the norm. This observation is also consistent with the findings reported by \citet{radiya2020fine}.

\begin{table}[t]
    \centering
    \scriptsize
    
    \begin{tabular}{lccc}
        \toprule
        {} & $\|w_*-w_0\|_2/\|w_0\|_2$ & $\|w_*-w_0\|_1/\|w_0\|_1$ & $\|w_*-w_0\|_\infty/\|w_0\|_\infty$ \\
        \midrule
        CoLA  &                    0.0101 &                    0.0095 &                              0.0205 \\
        MRPC  &                    0.0117 &                    0.0114 &                              0.0235 \\
        SST-2 &                     0.021 &                    0.0206 &                              0.0219 \\
        CB    &                    0.0074 &                    0.0063 &                               0.023 \\
        WNLI  &                     0.021 &                    0.0198 &                              0.0225 \\
        RTE   &                    0.0154 &                    0.0151 &                              0.0221 \\
        \bottomrule
        \end{tabular}
\caption{Distance between $w_0$ and $w_*$. We regularize the corresponding distance by dividing the norm of $w_0$.}
\label{tab:taylor}
\end{table}

\begin{figure}[t]
    \centering
    \includegraphics[width=0.99\columnwidth]{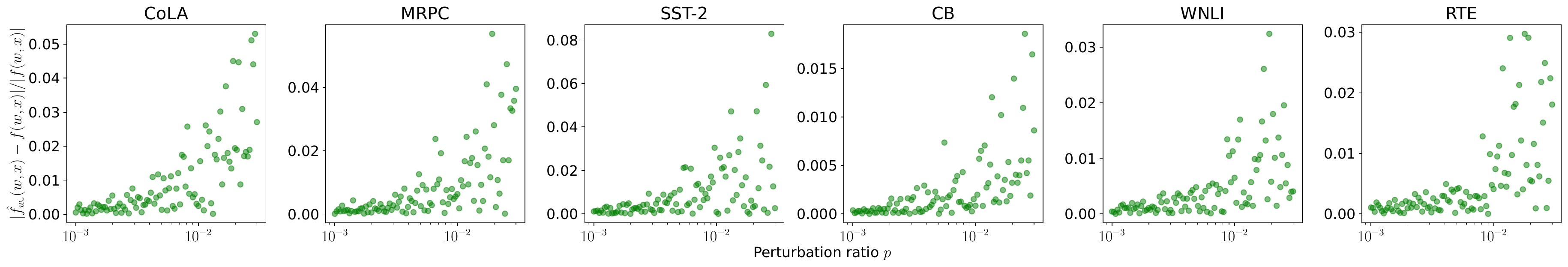}
    \caption{Validity of Taylor expansion.}
    \label{fig:taylor}
\end{figure}

\section{The Validity of Taylor Expansion}\label{sec:valtaylor}
Applying Taylor expansion approximation in theoretical analysis of neural networks has been used in many recent works \cite{liu2018towards,huang2020improving,foret2021sharpness,xu2021raise,barrett2021implicit,hoefler2021sparsity,smith2021origin,wen2022does,zuo2022moebert,shi2022gradient}. To further verify the validity of the Taylor expansion, we conducted a new experiment. For a fine-tuned model with parameter $w_*$, we first used a second-order Taylor expansion to expand the functions near $w_*$ as shown in Equation \ref{eqn:taylorexp}:

\begin{equation}
\hat{f}_{w_*}(w,x)=f(w_*,x)+(w-w_*)^{\T}\nabla f(w_*,x)+\frac{1}{2}(w-w_*)^{\T}\nabla^2f(w_*,x)(w-w_*),
\label{eqn:taylorexp}
\end{equation} 

where $\hat{f}_{w_*}(w,x)$ is short for the second-order Taylor expansion. Unfortunately, directly calculating the Hessian for a neural network is impossible as it requires storing $n^2$ parameters, where $n$ is the number of model parameters. For a simple Bert model with about 108 million parameters, the Hessian requires storing 11 quadrillion parameters, which is approximately 170,000 TB of memory and is infeasible to be stored in any device. On the other hand, though Hessian $\nabla^2f(w_*,x)$ is hard to calculate, the matrix-vector product $\nabla^2f(w_*,x)r$ is much cheaper to obtain \cite{bishop2006pattern} because we can take the derivative of $\nabla f(w_*,x)^\T r$ to directly get the results which can be denoted as
\begin{equation}
  \small
  \frac{\partial \nabla f(w_*,x)^\T r}{\partial w}=\frac{\partial \nabla f(w_*,x)^\T}{\partial w}r+\nabla f(w_*,x)^\T\frac{\partial r}{\partial w}=\frac{\partial \nabla f(w_*,x)^\T}{\partial w}r+0=\nabla^2f(w_*,x)r,
\end{equation}
where $r$ can be an arbitrarily given vector. Such an approach has already been implemented by most of the automatic differentiation softwares including Pytorch \cite{paszke2019pytorch}, Tensorflow \cite{abadi2016tensorflow}, JAX \cite{bradbury2018jax}, etc. Based on this observation, we first calculate the Hessian vector product $\nabla^2f(w_*,x)(w-w_*)$. Then, we can calculate each term in Equation \ref{eqn:taylorexp} to obtain the Taylor approximation. We conduct our experiment on 6 GLUE tasks including CoLA, MRPC, SST-2, CB, WNLI, and RTE. In each task, we first randomly perturb the model parameters $w_*$ as $w=w_*(1 + p(\mathcal{U} - 0.5))$ for each parameter,  where $p$ is a perturbation ratio parameter indicating how far $w$ is from $w_*$ and $\mathcal{U}$ is a uniformly distributed variable. Then we use the perturbed parameter $w$ to calculate the loss function denoted as $f(w,x)$. Besides, we also calculate the second Taylor expansion at $w_*$ for $w$ as $\hat{f}_{w_*}(w,x)$. Finally, we calculate the difference between the Taylor approximation and the real function value on $w$ as $|\hat{f}_{w_*}(w,x)-f(w,x)|$ and normalize it by dividing $|f(w,x)|$ to better reflect how the loss changes after using the Taylor expansion. We repeat this procedure on each task 100 times and the results are shown in \Cref{fig:taylor} where the x-axis is the perturbation ratio while the y-axis is the normalized loss distance between the real loss on $w$ and its Taylor approximation.
It can be observed from the results that the normalized loss gap $|\hat{f}_{w_*}(w,x)-f(w,x)|/|f(w,x)|$ increases as the perturbation ratio $p$ increases. This is because if $p$ is large, the new model with $w$ will be further from the fine-tuned model $w_*$. On the other hand, if the perturbation is controlled within 3\% of the parameter $w_*$, the loss for the new model $w$ will only change by no more than 5\% of the fine-tuned model.
Combined with the observation in Section \ref{sec:pardist}, that the model parameters only change around 1\%, we can come to an empirical conclusion that using Taylor expansion gives an acceptable approximation of the function around $w_*$. This conclusion verifies the validity of our assumption in the theoretical analysis.

\end{document}